\newcommand{\proposenameMask}{Alternative Modality Masking Pruning}
\newcommand{\proposenameMaskBold}{\textbf{Alter}native \textbf{Mo}dality  \textbf{Ma}sking Pruning}
\newcommand{\proposenameMaskshort}{AlterMOMA}
\newcommand{\proposenameImportance}{Alternative Evaluation}
\newcommand{\proposenameImportanceBold}{\textbf{Alter}native \textbf{Eva}luation}
\newcommand{\proposenameImportanceshort}{AlterEva}
\newcommand{\proposenameImportParamEva}{Deactivated Contribution Indicator}
\newcommand{\proposenameImportParamEvashort}{DeCI}
\newcommand{\proposenameRedundantParamEva}{Reactivated Redundancy Indicator}
\newcommand{\proposenameRedundantParamEvashort}{ReRI}
\newtheorem*{proposition}{Proposition}
\title{{\proposenameMaskshort}: Fusion Redundancy Pruning for Camera-LiDAR Fusion Models with Alternative Modality Masking}
\author{
    Shiqi Sun\thanks{Joint first authorship. Either author can be cited first. }  $^1$, Yantao Lu\thanks{Corresponding author. } \footnotemark[1]  $^1$, Ning Liu\footnotemark[1] $^2$, Bo Jiang$^3$, JinChao Chen$^1$, Ying Zhang$^1$ \\
    $^1$ Department of Computer Science, Northwestern Polytechnical University \\
    $^2$ Midea Group \\
    $^3$ Didi Chuxing \\
    \texttt{\{shiqisun, yantaolu, cjc, ying\_zhang\}@nwpu.edu.cn} \\
    \texttt{\{ningliu1220\}@gmail.com} \\
    \texttt{scottjiangbo@didiglobal.com} \\
}
\begin{document}

\maketitle

\begin{abstract}
Camera-LiDAR fusion models significantly enhance perception performance in autonomous driving.
The fusion mechanism leverages the strengths of each modality while minimizing their weaknesses.
Moreover, in practice, camera-LiDAR fusion models utilize pre-trained backbones for efficient training.
However, we argue that directly loading single-modal pre-trained camera and LiDAR backbones into camera-LiDAR fusion models introduces similar feature redundancy across modalities due to the nature of the fusion mechanism.
Unfortunately, existing pruning methods are developed explicitly for single-modal models, and thus, they struggle to effectively identify these specific redundant parameters in camera-LiDAR fusion models.
In this paper, to address the issue above on camera-LiDAR fusion models, we propose a novelty pruning framework {\proposenameMaskBold} ({\proposenameMaskshort}), which employs alternative masking on each modality and identifies the redundant parameters.
Specifically, when one modality parameters are masked (deactivated), the absence of features from the masked backbone compels the model to \textit{reactivate} previous redundant features of the other modality backbone. 
Therefore, these redundant features and relevant redundant parameters can be identified via the reactivation process.
The redundant parameters can be pruned by our proposed importance score evaluation function, {\proposenameImportanceBold} ({\proposenameImportanceshort}), which is based on the observation of the loss changes when certain modality parameters are activated and deactivated.
Extensive experiments on the nuScenes and KITTI datasets encompassing diverse tasks, baseline models, and pruning algorithms showcase that {\proposenameMaskshort} outperforms existing pruning methods, attaining state-of-the-art performance.
\end{abstract}

\section{Introduction}
\label{sec:intro}
Camera-LiDAR fusion models are prevalent in autonomous driving, effectively leveraging the sensor properties, including the accurate geometric data from LiDAR point clouds and the rich semantic context from camera images~\cite{li2022deepfusion, bevfusion-pku}, providing a more comprehensive understanding of the environment~\cite{ku2018joint, bevfusion-radar}.
However, the exponential increase in parameter counts due to fusion architectures introduces significant computational costs, especially when deploying these systems on resource-constrained edge devices, which is a crucial challenge for autonomous driving~\cite{nguyen2019high}.
Network pruning is one of the most attractive methods for addressing the challenge above of identifying and eliminating redundancy in models. 
Existing pruning algorithms target single-modal models~\cite{prune-chip, prune-grasp, prune-propr, liu2022spatial, frankle2018lottery} or multi-modal models that merge distinct types of data~\cite{shi2023upop, li2022supporting}, such as visual and language inputs. 
However, it's important to note that directly applying these algorithms to camera-LiDAR fusion models can lead to significant performance degradation.
The degradation can be reasoned for two main factors that existing pruning methods overlooked: 1) the key fusion mechanism specific to vision sensor inputs within models, and 2) the training scheme where models typically load single-modal pre-trained parameters onto each backbone~\cite{bevfusion-mit, bevfusion-pku}.
Specifically, since single-modality models lack the cross-modality fusion mechanism, existing pruning algorithms traditionally do not consider inter-modality interactions. 
Furthermore, because the pre-trained backbones (image or LiDAR) are trained separately, they are not optimized jointly, exacerbating the redundancy in features extracted from each backbone.
Though leveraging pre-trained backbone improves the training efficiency compared with models training from scratch, we argue that \textit{directly loading single-modal pre-trained camera and LiDAR backbones into camera-LiDAR fusion models introduces similar feature redundancy across modalities due to the nature of the fusion mechanism.}

In detail, since backbones are independently pre-trained on single-modal datasets, they extract features comprehensively, which leads to similar feature extraction across modalities. 
Meanwhile, the fusion mechanism selectively leverages reliable features while minimizing weaker ones across modalities to enhance model performance.
This selective utilization upon similar feature extraction across modalities introduces the additional redundancy: \textit{Each backbone independently extracts similar features, which subsequent fusion modules will not potentially utilize}.
For instance, both camera and LiDAR backbones extract geometric features to predict depth during pre-training.
However, geometric features extracted from the LiDAR backbone are considered more reliable during fusion because LiDAR input data contain more accurate geometric information than the cameras, e.g., object distance, due to the physical properties of sensors.
Consequently, this leads to the redundancy of geometric features of the camera backbone.
In summary, similar feature extraction across modalities, coupled with the following selective utilization in fusion modules, leads to two counterparts of similar features across modalities: those utilized by fusion modules in one modality (i.e., fusion-contributed), and those that are redundant in the other modality (i.e., fusion-redundant). 
We also illustrate the fusion-redundant features in Figure~\ref{fig:motivation}.

\begin{figure*}[t]
\centering 
\includegraphics[width=0.95\textwidth]{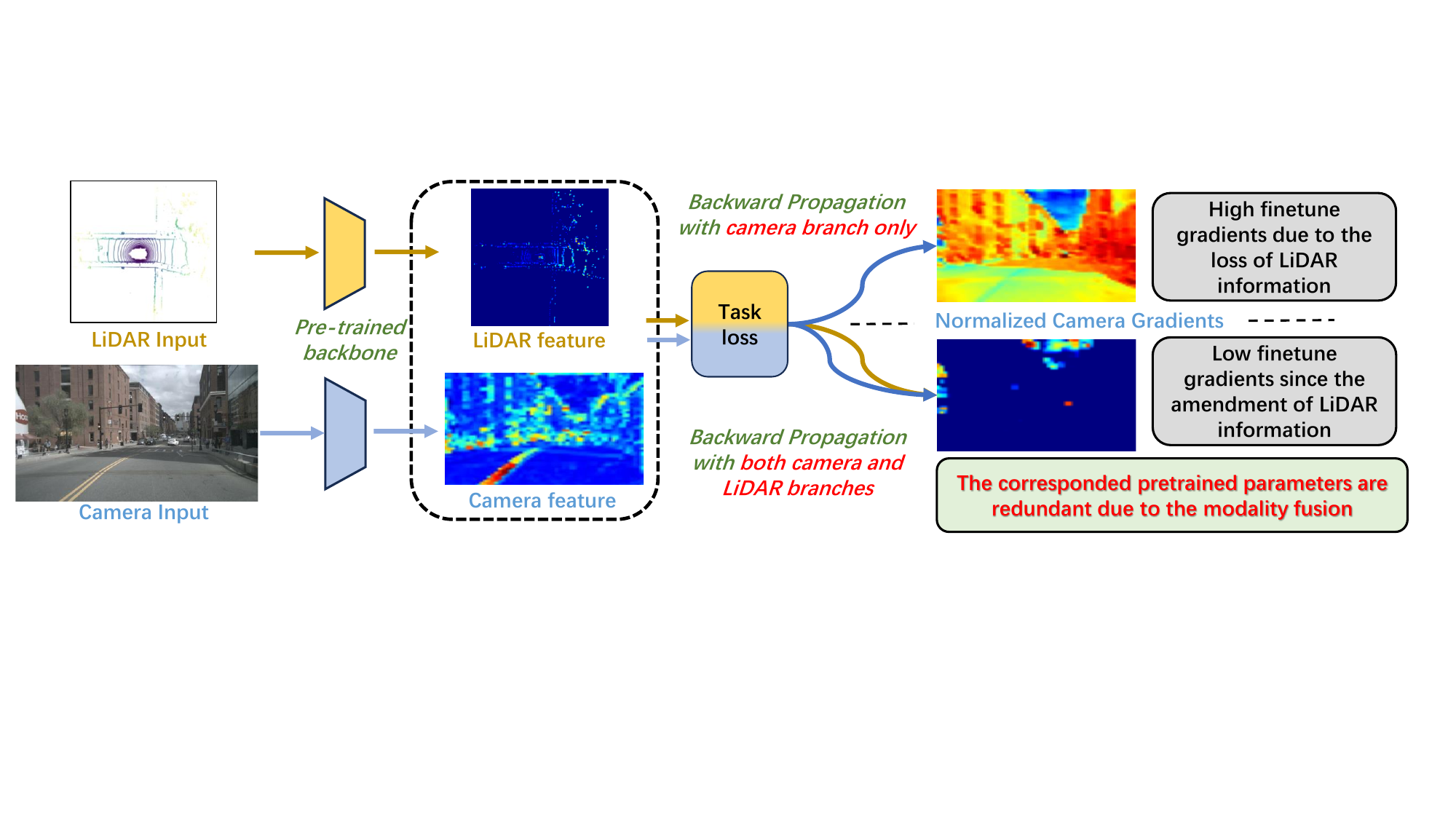}
\caption{ 
\small{
\textbf{Motivating example} of fusion-redundant features in the 3D object detection task. 
We employ backward propagation on camera-LiDAR fusion models with pre-trained backbones to observe the gradient difference (features utilization) between with camera backbone only and with both the camera and LiDAR backbone.
Notably, certain pre-trained parameters in the camera backbone are redundant due to the amendment of LiDAR information.
It reveals that similar feature extraction exists across modalities, which introduces additional redundancy when camera-LiDAR fusion models directly loads single-modal pre-trained backbones.
} 
}
\label{fig:motivation} 
\vspace{-0.6cm}
\end{figure*}

To address the above challenge, we propose a novel pruning framework \textbf{{\proposenameMaskshort}}, specifically designed for camera-LiDAR fusion models to identify and prune fusion-redundant parameters.
{\proposenameMaskshort} employs alternative masking on each modality, followed by observing loss changes when certain modality parameters are activated and deactivated.
These observations serve as important indications to identify fusion-redundant parameters, which are integral to our importance scores evaluation function, \textbf{{\proposenameImportanceshort}}.
Specifically, the camera and LiDAR backbones are alternatively masked.
During this process, the absence of fusion-contributed features and relevant parameters in the masked (deactivated) backbone compels the fusion modules to reactivate their fusion-redundant counterparts from the other backbone. 
Throughout this reactivation, changes in loss are observed as indicators for contributed and fusion-redundant parameters across modalities.
These indicators are then combined in {\proposenameImportanceshort} to maximize the importance scores of contributed parameters while minimizing the scores of fusion-redundant parameters.
Then, parameters with low importance scores will be pruned to reduce computational costs.

To validate the effectiveness of our proposed framework,  extensive experiments are conducted on several popular 3D perception datasets with camera and LiDAR sensor data, including nuScenes~\cite{caesar2020nuscenes} and KITTI~\cite{kitti}.
These datasets encompass a range of 3D autonomous driving tasks, including 3D object detection, tracking, and segmentation.

The contributions of this paper are as follows: 1) We propose a pruning framework \textbf{{\proposenameMaskshort}} to effectively compress camera-LiDAR fusion models; 2) We propose an importance score evaluation function \textbf{{\proposenameImportanceshort}}, which identifies fusion-redundant features and their relevant parameters across modalities; 3) We validate the effectiveness of the proposed {\proposenameMaskshort} on \textbf{nuScenes} and \textbf{KITTI} for 3D detection and segmentation tasks.

\section{Related Work}
\label{sec:background}
\textbf{Camera-LiDAR Fusion.} With the advancement of autonomous driving technology, the efficient fusion of diverse sensors, particularly cameras and LiDARs, has become crucial~\cite{yan2018second, yang2018pixor}. 
Fusion architectures can be categorized into three types based on the stage of fusion within the learning framework: early fusion~\cite{vora2020pointpainting, wang2021pointaugmenting}, late fusion~\cite{pang2020clocs}, and intermediate fusion~\cite{ku2018joint, bevfusion-mit, bevfusion-pku}.
Current state-of-the-art (SOTA) fusion models evolve primarily within intermediate fusion and combine low-level machine-learned features from each modality to yield unified detection results, thus significantly enhancing perception performance compared with early or late fusion.
Specifically, camera-LiDAR fusion models focus on aligning the camera and LiDAR features through dimension projection at various levels, including point~\cite{chen2023futr3d}, voxel~\cite{chen2022autoalign}, and proposal~\cite{ku2018joint}.
Notably, the SOTA fusion paradigm aligns all data to the bird's eye view (BEV)~\cite{bevfusion-mit, bevfusion-radar, bevfusion-pku, huang2021bevdet, li2022bevformer}, has gained traction as an effective approach to maximize the utilization of heterogeneous data types. 

\textbf{Network Pruning.}
Network pruning effectively compresses deep models by reducing redundant parameters and decreasing computational demands. 
Pruning algorithms have been well-explored for single-modal perception tasks~\cite{prune-convStruct, prune-hrank, prune-cp3, prune-chip, prune-vt, prune-2dlocaldetection}, focusing on evaluating importance scores to identify and remove redundant parameters or channels.
These scores are based on data attributes~\cite{prune-cp3, prune-chip}, weight norms~\cite{he2020learning, prune-snip}, or feature map ranks~\cite{prune-hrank}. 
However, single-modal pruning algorithms are not suited for the complexities of camera-LiDAR fusion models.
While some multi-modal pruning algorithms exist~\cite{shi2023upop, li2022supporting}, they are mainly designed for models combining different data types like language and vision. 
Therefore, there is a pressing need for pruning algorithms specifically devised for camera-LiDAR fusion models.
From the perspective of granularity, pruning algorithms can be divided into two primary categories: 1) structured pruning, which entails removing entire channels or rows from parameter matrices, and 2) unstructured pruning, which focuses on eliminating individual parameters.
For practical applications, we have adapted our method to support both types of pruning.

\section{Methodology}
\label{sec:method}
\subsection{Preliminaries}
\label{sec:prelim}
We firstly review some basic concepts including camera-LiDAR fusion models and pruning formulation.
Camera-LiDAR fusion models consist of
1) a LiDAR feature extractor $\mathbf{F}_{l}$ to extract features from point cloud inputs, 
2) a camera feature extractor $\mathbf{F}_{c}$ to extract features from image inputs,
3) the fusion module and following task heads $\mathbf{F}_{f}$ to get the final task results.
The parameters denote as $\theta$ = \{ $\theta_{l}$, $\theta_{c}$, $\theta_{f}$ \} for LiDAR backbone, camera backbone, and fusion and task heads, respectively. 
Take camera backbone for instance, $\theta_{c} = \{ \theta_{c}^1, \theta_{c}^2, ... , \theta_{c}^{N_c}\}$ denotes all weights in the camera backbone, where $N_c$ represents the total number of parameters in camera backbone.
Therefore, for the LiDAR input $\mathbf{X}_l$ and camera input $\mathbf{X}_c$, the training process of models could be denoted as
\begin{equation}
\label{eqn:taskloss}
\small
    \arg\min_{\theta_{l, c, f}} \mathcal{L}(\mathbf{Y}, \mathbf{F}_f(\theta_f; \mathbf{F}_l(\theta_l; \mathbf{X}_l), \mathbf{F}_c(\theta_c; \mathbf{X}_c)) ,
\end{equation}
where $\mathbf{Y}$ denotes the ground truth, and $\mathcal{L}$ represents the task-specific loss functions.

Importance-based pruning typically involves using metrics to evaluate the importance scores of parameters or channels.
Subsequently, optimization methods are employed to prune the parameters with lower importance scores, that are nonessential within the model.
For the camera-LiDAR fusion models, the optimization process can be formulated as follows:
\begin{equation}
\small
\label{eqn:prune}
    \arg\max_{\delta_{ij}}\sum_{i \in \{l, c, f\}} \sum_{j=1}^{N_i}\delta_{ij}\mathbf{S}\big(\theta_{i}^{j}\big) \textit{,  s.t.} \sum_{i \in \{l, c, f\}}\sum_{j=1}^{N_i} \delta_{ij} = k
\end{equation} 
where $\delta_{ij}$ is an indicator which is 1 if $\theta_{i}^{j}$ will be kept or 0 if $\theta_{i}^{j}$ is to be pruned. $\mathbf{S}$ is designed to measure the importance scores for parameters, and $k$ represents the kept parameter number, where $k =  (1 - \rho)\cdot\sum_{i \in \{l, c, f\}} N_i$ with the pruning ratio $\rho$.

\subsection{Overview of \proposenameMask}
\begin{figure*}[t]
\centering 
\includegraphics[width=0.95\textwidth]{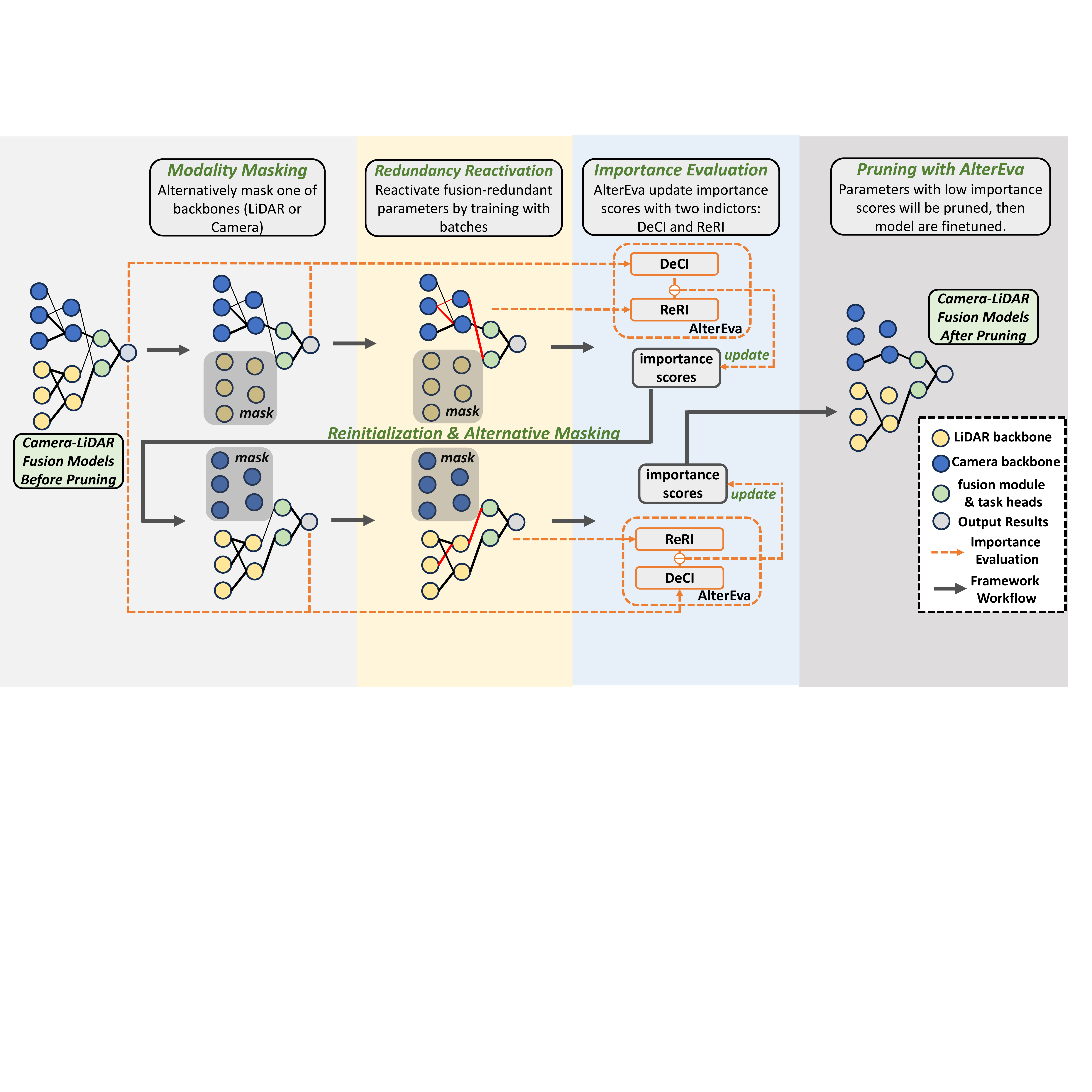}
\caption{
\small{
\textbf{
Overview of the {\proposenameMaskshort} }:
The framework begins with \textit{Modality Masking}, where one of the backbones is initially masked. 
This step is followed by \textit{Redundancy Reactivation} and \textit{Importance Evaluation}, where the parameter importance scores are initially calculated with {\proposenameImportanceshort}.
Afterward, the models undergo \textit{Reinitialization} and \textit{Alternative Masking} of the other backbone, leading to another round of \textit{Redundancy Reactivation} and \textit{Importance Evaluation}. 
When scores of all parameters in backbones are calculated fully with {\proposenameImportanceshort} (detailed in Section~\ref{sec:importance}), models are pruned to remove parameters with low importance scores and then finetuned. 
Notably, we use \textit{black} lines to represent parameters of models and \textcolor{red}{\textit{red}} lines to represent reactivated fusion-redundant parameters. 
The thickness of these lines indicates the contribution of parameters.
} 
}
\label{fig:overview} 
\vspace{-0.5cm}
\end{figure*}
\label{sec:overview}
Similar feature extraction across modalities, coupled with the selective utilization of features in the following fusion modules introduce redundancy in camera-LiDAR fusion models.
Therefore, similar features and their relevant parameters can be categorized into two counterparts across modalities: those that contribute to fusion and subsequent task heads (fusion-contributed), and those that are redundant (fusion-redundant).
In this section, we propose the pruning framework {\proposenameMaskshort}, which alternatively employs masking on camera and Lidar backbones to identify and remove the fusion-redundant parameters. 
{\proposenameMaskshort} is developed based on a novel insight: \textit{''The absence of fusion-contributed features will compel fusion modules to 'reactivate' their fusion-redundant counterparts as supplementary, which, though less effective, are necessary to maintain functionality.''}
For instance, if the LiDAR backbone is masked, the previously fusion-contributed geometric features it provided are absent. 
To fulfill the need for accurate position predictions, the model still needs to process geometric features. 
Consequently, the fusion module is compelled to utilize the geometric features from the unmasked camera backbone, which were previously fusion-redundant.
We refer to this process as \textit{Redundancy Reactivation.}
By observing changes during this \textit{Redundancy Reactivation,} fusion-redundant parameters can be identified.
The overview of {\proposenameMaskshort} is shown in Figure~\ref{fig:overview}, and the detailed steps are in Algorithm~\ref{alg:fusion} of Appendix~\ref{append:pseudo}. The key steps are introduced as follows:

\textbf{Modality Masking.} Three binary masks are denoted as $\mu_l$, $\mu_c$, and $\mu_f \in \{0, 1\}$, correspond to the parameters applied separately on the LiDAR backbone, the camera backbone, and the fusion and tasks head. 
Our framework begins by masking either one of the camera backbones or the LiDAR backbone. 
Here we take masking the LiDAR backbone as the illustration. The masks are with  $\mu_l = 0$, $\mu_c = 1$, $\mu_f = 1$.
The camera backbone will be masked alternatively. 

\textbf{Redundancy Reactivation.} 
To allow masked models to reactivate fusion-redundant parameters, we train masked models with batches of data. 
Specifically, $B$ batches of data $\mathcal{D}_i$, $i \in \{ 1,2, ..., B\} $ are sampled from the multi-modal dataset $\mathcal{D}$. 

\textbf{Importance Evaluation.} 
After \textit{Redundancy Reactivation}, the importance scores of parameters in the camera backbones are calculated with our proposed importance score evaluation function {\proposenameImportanceshort} detailed in Section~\ref{sec:importance}.
Since fusion modules need to consider the reactivation of both modalities, the importance scores of parameters in the fusion module and task heads will be updated once the importance scores of both the camera and Lidar backbones' parameters are calculated.

\textbf{Alternative Masking.} 
After \textit{Importance Evaluation} of camera modality, models will reload the initialized parameters, and then the other backbone will alternatively be masked, with $\mu_l = 1$, $\mu_c = 0$, $\mu_f = 1$. 
Then the step \textit{Redundancy Reactivation} and \textit{Importance Evaluation} will be processed again to update the importance scores of parameters in the LiDAR backbone and the fusion module.

\textbf{Pruning with {\proposenameImportanceshort}.} After evaluating the importance scores using {\proposenameImportanceshort}, parameters with low importance scores are pruned with a global threshold determined by the pruning ratio.
Once the pruning is finished, the model is fine-tuned with the task-specific loss, as indicated in Eqn.~\ref{eqn:taskloss}. 

\subsection{\proposenameImportance}
\label{sec:importance}
In this section, we will detail the formulation of our proposed {\proposenameImportanceshort}, which consists of two distinct indicators to evaluate the parameter importance scores.
As outlined in section ~\ref{sec:overview}, the importance scores are alternatively calculated with {\proposenameImportanceshort} in \textit{Importance Evaluation}.
Then, parameters with low importance scores are removed in the pruning process.
The goal of {\proposenameImportanceshort} is to maximize the scores of parameters that contribute to task performance while minimizing the scores of fusion-redundant parameters. 
To achieve this, {\proposenameImportanceshort} incorporates two key indicators: 1) \textbf{\proposenameImportParamEva} ({\proposenameImportParamEvashort}) evaluate the parameter contribution to the overall task performance of the fusion models, 2) \textbf{{\proposenameRedundantParamEva}} (\proposenameRedundantParamEvashort) identifies fusion-redundant parameters across both modalities.
Since changes in loss can directly reflect the parameter contribution difference to task performance during alternative masking, both indicators are designed based on the observation of loss decrease or increase, when certain modality parameters are activated or deactivated. 
Specifically, take parameters in the camera backbone as an instance, {\proposenameImportParamEvashort} observe the loss increases with masking camera backbone itself, while {\proposenameRedundantParamEvashort} observe loss decrease with masking LiDAR backbone and reactivating camera backbone via \textit{Redundancy Reactivation}.
Formally, we formulate the loss for the fusion models with masks.
With three binary masks and the dataset defined in Section \ref{sec:overview}, the loss is denoted as follows, by simplifying some of the extra notations used in Eqn.~\ref{eqn:taskloss}:
\begin{equation}
\small
    \mathcal{L}_m(\mu_c, \mu_l, \mu_f; \mathcal{D}) = \mathcal{L}(\mu_l\odot \theta_l, \mu_c\odot \theta_c, \mu_f\odot \theta_f; \mathcal{D}) .
\end{equation}
For brevity, we assume $\mu_c = 1$, $\mu_l = 1$, and $\mu_f = 1$, and we only specify in the formulation when a mask is zero. 
For example, $\mathcal{L}_{m}(\mu_c = 0; \mathcal{D})$ indicates that $\mu_c = 0$, $\mu_f = 1$ and $\mu_l = 1$.
Since the alternative masking is performed on both backbones, we illustrate our formulation by calculating two indicators for parameters in the camera backbone. 

\textbf{\proposenameImportParamEva.} If a parameter is important and contributes to task performance, deactivating this parameter will lead to task performance degradation, which will be reflected in an increase in loss.
Therefore, to derive the contribution of the $i$-th parameter $\theta_c^i$ of the camera backbone, we observe the changes in loss when this parameter is deactivating via masking, denoted as follows:
\begin{equation}
\small
\label{eqn:selfmask_change}
\hat{\Phi}_{\theta_c^i} = |\mathcal{L}_m(;\mathcal{D}) - \mathcal{L}_m(\mu_c^i = 0; \mathcal{D})| ,
\end{equation}
where $\mu_c^i$ represents the mask for $\theta_c^i$, and $\hat{\Phi}_{\theta_c^i}$ denotes the indicator {\proposenameImportParamEvashort} for $\theta_c^i$.
However, the total number of parameters is enormous, deactivating and evaluating each parameter independently are computationally intractable. 
Therefore, we design an alternative efficient method to approximate the evaluation in Eqn.~\ref{eqn:selfmask_change} by leveraging the Taylor first-order expansion.
We first observe the loss changes $|\mathcal{L}_m(;\mathcal{D}) - \mathcal{L}_m(\mu_c = 0; \mathcal{D})|$ by deactivating the entire camera backbones.
Then, the first-order approximation of evaluation in Eqn.~\ref{eqn:selfmask_change} is calculated by expanding the loss change in each individual parameter $\theta_c^i$ with Taylor expansion, considering $\theta_c = \{ \theta_c^1, ..., \theta_c^{N_c} \}$.
This method allows us to estimate the contribution for each parameter, denoted as follows:
\begin{equation}
\small
\label{eqn:ta_contri}
\hat{\Phi}_{\theta_c^i} 
=  \big| \mathcal{L}_m(;\mathcal{D})+ \mu_c^i \odot \theta_c^i \cdot \frac{\partial\mathcal{L}_m(;\mathcal{D})}{\partial \theta_c^i} -\mathcal{L}_m(\mu_c = 0; \mathcal{D})  - \mu_c^i \odot \theta_c^i \cdot \frac{\partial \mathcal{L}_m(\mu_c = 0; \mathcal{D})}{\partial \theta_c^i} \big| .
\end{equation}
When $\theta_c$ is deactivating with $\mu_c = 0$, $\mu_c^i = 0$ for $i \in \{1,...,N^c\}$, which means that the last term of Eqn.~\ref{eqn:ta_contri} is zero.
Meanwhile, when considering importance scores on a global scale, the $\mathcal{L}_m(;\mathcal{D})$ and $\mathcal{L}_m(\mu_c = 0; \mathcal{D})$ can be treated as constant for all $\theta_c^i$. 
Thus the first term and the third term can be disregarded.
Therefore, the final indicator of each parameter's contribution, represented by our {\proposenameImportParamEvashort}, can be expressed as follows:
\begin{equation}
\small
\label{eqn:equal_proof}
\hat{\Phi}_{\theta_c^i}  = \big|\theta_c^i \cdot \frac{\partial\mathcal{L}_m(;\mathcal{D})}{\partial \theta_c^i} \big| .
\end{equation}
This formulation enables tractable and efficient computation without \textit{Modality Masking} of the camera backbone itself, achieved by performing a single backward propagation in the \textit{Importance Evaluation} with initialized parameters.

\textbf{\proposenameRedundantParamEva.} As discussed in Section~\ref{sec:overview}, the identification of fusion-redundant parameters relies on our understanding of the fusion mechanism: when fusion-contributed features from the LiDAR backbone are absent due to masking, the previously fusion-redundant counterparts and their relevant parameters from the camera backbone will be reactivated during the \textit{Redundancy Reactivation}.
Therefore, to reactivate and identify fusion-redundant parameters in the camera backbone, the \textit{Modality Masking} of the LiDAR backbone ($\mu_l = 0$) and \textit{Redundancy Reactivation} are processed first.
Throughout this process, the loss evolves from $\mathcal{L}_m(\mu_l=0; \mathcal{D}_1)$ to $\mathcal{L}_m(\mu_l=0; \mathcal{D}_B)$, and the parameters evolve from $\theta_{c,0}$ (i.e. $\theta_c$) to $\theta_{c,B}$.
Similar to the formulation of {\proposenameImportParamEvashort}, we observe the decrease in loss during \textit{Redundancy Reactivation} and refer to this observation as our {\proposenameRedundantParamEvashort}, denoted as follows:
\begin{equation}
\small
\begin{split}
        \Tilde{\Phi}_{\theta_c}  &= | \mathcal{L}_m(\mu_l=0;\mathcal{D}) - \mathcal{L}_m(\mu_l=0; \mathcal{D}_1) + ... + \mathcal{L}_m(;\mathcal{D}_{B-1}) - \mathcal{L}_m(\mu_l=0; \mathcal{D}_B) | \\
        &= | \mathcal{L}_m(\mu_l=0;\mathcal{D}) - \mathcal{L}_m(\mu_l=0; \mathcal{D}_B) | .
\end{split}
\end{equation}
Specifically, this process is designed to identify parameters that contribute to the task performance of models with the masked LiDAR backbone, highlighting those that are fusion-redundant.
Since we want to observe reactivation rather than parameters updating of this masked model across training batches, we apply the first-order Taylor expansion to the initial $i$-th parameters $\theta_{c,0}^i$, denoted as:
\begin{equation}
\small
\label{eqn:tay_2}
\begin{split}
\Tilde{\Phi}_{\theta_{c, 0}^i}  =  
\big| \mathcal{L}_m(\mu_l=0;\mathcal{D})
&+ \mu_c^i \odot \theta_{c, 0}^i \cdot \frac{\partial\mathcal{L}_m(\mu_l=0;\mathcal{D})}{\partial \theta_{c, 0}^i } \\
&- \mathcal{L}_m(\mu_l = 0; \mathcal{D}_B)  
- \mu_c^i \odot \theta_{c, 0}^i \cdot \frac{\partial \mathcal{L}_m(\mu_l = 0; \mathcal{D}_B)}{\partial \theta_{c, 0}^i} \big| .
\end{split}
\end{equation}
To derive the gradient on initial parameters $\theta_{c, 0}^i$ of the last term, we could use the chain rule and write out based on the gradient of the last step,
\begin{equation}
\small
\label{eqn:drop_rate}
    \frac{\partial \mathcal{L}_m(\mu_l = 0; \mathcal{D}_B)}{\partial \theta_{c,0}^i} \cdot \theta_{c,0}^i 
    = \frac{\partial \mathcal{L}_m(\mu_l = 0; \mathcal{D}_B)}{\partial \theta_{c, B}^i} \prod_{j=1}^{B} \frac{\partial \theta_{c,j}^i }{\partial \theta_{c,j-1}^i} \cdot \theta_{c, 0}^i
    \approx \frac{\partial \mathcal{L}_m(\mu_l = 0, \mathcal{D}_B)}{\partial \theta_{c,B}^i}  \cdot \theta_{c,0}^i .
\end{equation}
According to the Proposition~\ref{pro:approx} in the Appendix~\ref{append:pro}, 
this approximation is reached by dropping some small terms with sufficiently small learning rates.
Since $\theta_c^i$ is activating with $\mu_c^i = 1$, and the $\mathcal{L}_m(\mu_l=0;\mathcal{D})$ and $\mathcal{L}_m(\mu_c = 0; \mathcal{D}_B)$ can be treated as constant for all $\theta_c^i$,
we could denote our final formulation by simplifying Eqn.~\ref{eqn:tay_2}, and denoted as follow:
\begin{equation}
\small
\label{eqn:reactive_proof}
\Tilde{\Phi}_{\theta_c^i}  =  \big| \theta_c^i \cdot \frac{\partial\mathcal{L}_m(\mu_l=0;\mathcal{D})}{\partial \theta_c^i } - \theta_c^i \cdot \frac{\partial \mathcal{L}_m(\mu_l = 0; \mathcal{D}_B)}{\partial \theta_{c,B}^i} \big|  ,
\end{equation}
where $\theta_c^i$ is the $\theta_{c, 0}^i$ and $\Tilde{\Phi}_{\theta_c^i}$ represent the {\proposenameRedundantParamEvashort} for $\theta_c^i$.

To the goal of parameters with significant contributions maintaining high importance scores while those identified as fusion-redundant are assigned lower scores, {\proposenameImportanceshort} calculate the final importance scores by subtracting {\proposenameRedundantParamEvashort} from {\proposenameImportParamEvashort}. 
Since {\proposenameImportParamEvashort} of parameters in the camera backbone could be calculated without masking the camera backbone itself,  {\proposenameImportParamEvashort} and {\proposenameRedundantParamEvashort} of camera parameters could be calculated in the same alternative masking stage (with LiDAR backbone masking), which simplifies the process of our framework {\proposenameMaskshort}.
Therefore, with the combination Eqn.~\ref{eqn:equal_proof} and Eqn.~\ref{eqn:reactive_proof}, the {\proposenameImportanceshort} of the camera backbones could be presented with the normalization:
\begin{equation}
\small
\label{eqn:camera_scores}
\mathcal{S}(\theta_c^i) = \alpha\cdot\frac{\hat{\Phi}_{\theta_c^i}}{ \sum_{j = 0}^{N_c} \hat{\Phi}_{\theta_c^j} } 
- \beta\cdot\frac{\Tilde{\Phi}_{\theta_c^i}}{ \sum_{j = 0}^{N_c} \Tilde{\Phi}_{\theta_c^j} },
\end{equation}
where $\alpha$ and $\beta$ are the hyper parameters and $\mathcal{S}(\theta_c^i)$ represent the importance score evaluation function {\proposenameImportanceshort} for $\theta_c^i$.
Similarly, the {\proposenameImportanceshort} of parameters in the LiDAR backbones (i.e. $\theta_l$) and in the fusion modules (i.e. $\theta_f$ ) could be derived as: 
\begin{align}
\small
\label{eqn:lidar_scores}
\mathcal{S}(\theta_l^i) &= \alpha\cdot\frac{\hat{\Phi}_{\theta_l^i}}{ \sum_{j = 0}^{N_l} \hat{\Phi}_{\theta_l^j} } 
- \beta\cdot\frac{\Tilde{\Phi}_{\theta_l^i}}{ \sum_{j = 0}^{N_l} \Tilde{\Phi}_{\theta_l^j} }, \\
\label{eqn:fusion_scores}
\mathcal{S}(\theta_f^i) &= \alpha\cdot\frac{\hat{\Phi}_{\theta_f^i}}{ \sum_{j = 0}^{N_f} \hat{\Phi}_{\theta_f^j} } 
- \frac{\beta}{2} \cdot\frac{ \Tilde{\Phi}_{\theta_f^i}(\mu_l = 0) }
{ \sum_{j = 0}^{N_f} \Tilde{\Phi}_{\theta_f^j} (\mu_l = 0) } 
- \frac{\beta}{2}\cdot\frac{\Tilde{\Phi}_{\theta_f^i}(\mu_c = 0) }
{ \sum_{j = 0}^{N_f} \Tilde{\Phi}_{\theta_f^j} (\mu_c = 0)} , 
\end{align}
where $\Tilde{\Phi}_{\theta_l^i}$ and $\Tilde{\Phi}_{\theta_f^i}(\mu_c = 0)$ is calculated when camera backbone is masking, while $\Tilde{\Phi}_{\theta_f^i}(\mu_l = 0)$ is calculated with LiDAR backbone masking. 
{\proposenameImportanceshort} can efficiently calculate importance scores with backward propagation, enhancing the tractability of {\proposenameMaskshort}. 
For brevity, we omit the derivations related to parameters in the LiDAR backbone and fusion modules, but additional details are available in Appendix~\ref{appendix:lidar} and Appendix~\ref{appendix:fusion}.

\section{Experimental Results}
\label{sec:experiment}
\subsection{Baseline Models and Datasets}
\label{sec::exp::baselinemodel}
To validate the efficacy of our proposed framework, empirical evaluations were conducted on several camera-LiDAR fusion models, including the two-stage detection models AVOD-FPN~\cite{ku2018joint}, as well as the end-to-end architecture based on BEV space, such as BEVfusion-mit~\cite{bevfusion-mit} and BEVfusion-pku~\cite{bevfusion-pku}. 
For AVOD-FPN, the point cloud input is processed using a voxel grid representation, while all input views are extracted using a modified VGG-16~\cite{simonyan2014very}.
Notably, the experiment on the AVOD-FPN demonstrates the efficiency of {\proposenameMaskshort} on two-stage models, although this isn't the SOTA fusion architecture for recent 3D perception tasks.
Current camera-LiDAR fusion models are moving towards a unified architecture that extracts camera and LiDAR features within a BEV space.
Thus, our primary results focus on BEV-based unified architectures, specifically BEVfusion-mit~\cite{bevfusion-mit} and BEVfusion-pku~\cite{bevfusion-pku}.
We conducted tests using various backbones. 
For camera backbones, we included Swin-Transformer (Swin-T)~\cite{liu2021swin} and ResNet~\cite{he2016deep}. 
For LiDAR backbones, we used SECOND~\cite{yan2018second}, VoxelNet~\cite{zhou2018voxelnet} and PointPillars~\cite{lang2019pointpillars}. 

We perform our experiments for both 3D object detection and BEV segmentation tasks on the KITTI~\cite{kitti} and nuScenes~\cite{caesar2020nuscenes}, which are challenging large-scale outdoor datasets devised for autonomous driving tasks.
The KITTI dataset contains 14,999 samples in total, including 7,481 training samples and 7,518 testing samples, with a comprehensive total of 80,256 annotated objects.
To adhere to standard practice, we split the training samples into a training set and a validation set in approximately a 1:1 ratio and followed the difficulty classifications proposed by KITTI, involving \textit{easy}, \textit{medium}, and \textit{hard}.
NuScenes is characterized by its comprehensive annotation scenes, encompassing tasks including 3D object detection, tracking, and BEV map segmentation. 
Within this dataset, each of the annotated 40,157 samples presents an assemblage of six monocular camera images, adept at capturing a panoramic 360-degree field of view. 
This dataset is further enriched with the inclusion of a 32-beam LiDAR scan, amplifying its utility and enabling multifaceted data-driven investigations.

\begin{table*}[t]
\caption{
\small{
\textbf{3D object Detection Performance Comparison with the state-of-the-art pruning methods} on the nuScene validation dataset. We list the mAP and NDS of models pruned by different approaches within 80\%, 85\%, and 90\% pruning ratios. The two baseline models are trained with SwinT and VoxelNet backbone.
}
}
\label{tab:detnus}
\centering
\resizebox{1.0\linewidth}{!}{
\begin{tabular}{lcccccccccccc} 
\Xhline{1pt}
\multicolumn{1}{l}{\multirow{1}{*}{\textbf{Baseline Model}}}      
& \multicolumn{6}{c}{BEVfusion-mit } & \multicolumn{6}{c}{BEVfusion-pku } \\
\cmidrule(lr){2-7} 
\cmidrule(lr){8-13} 
\multicolumn{1}{l}{\multirow{1}{*}{\textbf{Sparsity}}}      
& \multicolumn{2}{c}{80\%} & \multicolumn{2}{c}{85\%} & \multicolumn{2}{c}{90\%} & \multicolumn{2}{c}{80\%} & \multicolumn{2}{c}{85\%} & \multicolumn{2}{c}{90\%} \\ 
\cmidrule(lr){2-3} 
\cmidrule(lr){4-5} 
\cmidrule(lr){6-7}
\cmidrule(lr){8-9}
\cmidrule(lr){10-11}
\cmidrule(lr){12-13}
\multicolumn{1}{l}{\multirow{1}{*}{\textbf{Metric}}}  
& mAP &  NDS & mAP &  NDS & mAP &  NDS 
& mAP &  NDS & mAP &  NDS & mAP &  NDS\\
\hline
\multicolumn{1}{l }{\textbf{[No Pruning]}}
&67.8 &70.7 &- &- &- &-
&66.9 &70.4 &-&- &- &-
\\ \hline
IMP
&59.3 & 66.8
&51.2 & 59.2  
&42.7  & 50.4 
&57.3  & 65.9
&49.8 & 57.7 
&40.7  & 47.2 \\ 
SynFlow
&63.2  & 67.9 
&56.9& 64.1  
&49.3  & 58.7 
&62.4 & 67.1
&55.4 & 63.2 
&47.6& 57.1 \\ 
SNIP
&62.2  & 67.5 
&56.4  & 63.6
&50.2  & 58.8
&61.8 & 67.5 
&54.7  & 62.9
&47.8  & 57.3 \\ 
ProsPr
&64.3  & 69.6  
&61.9  & 66.1  
&58.6  & 62.5 
&63.6 & 68.4 
&59.9  & 66.1 
&56.7  & 62.7  \\ 
\textbf{{\proposenameMaskshort} (Ours)}
&\textbf{67.3} & \textbf{70.2}  
&\textbf{65.5} & \textbf{69.5} 
&\textbf{63.5} & \textbf{66.7}
&\textbf{66.5} & \textbf{70.1} 
&\textbf{64.2} & \textbf{68.1} 
&\textbf{62.3}  & \textbf{66.0} \\ 
\Xhline{1pt} 
\end{tabular}
}
\vspace{-0.2cm}
\end{table*}

\subsection{Implementation Details}
\label{sec::exp::details}
We conducted the 3D object detection and segmentation experiments with MMdetection3D~\cite{chen2019mmdetection} on NVIDIA RTX 3090 GPUs. 
To ensure fair comparisons, consistent configurations of hyperparameters were employed across different experimental groups.
To train the 3D object detection baselines, we utilize Adam as the optimizer with a learning rate of 1e-4.
We employ Cosine Annealing as the parameter scheduler and set the batch size to 2.
For BEV segmentation tasks, we employ Adam as the optimizer with a learning rate of 1e-4.
We utilize the one-cycle learning rate scheduler and set the batch size to 2.
The hyperparameters $\alpha$ and $\beta$ in Section~\ref{sec:importance} are both set with 1.
The baseline pruning methods include IMP~\cite{frankle2018lottery}, SynFlow~\cite{prune-syn}, SNIP~\cite{prune-snip}, and ProsPr~\cite{prune-propr}, with the hyperparameters specified in their papers respectively.

\subsection{Experimental Results on Unstructured Pruning}
\label{sec::exp::unstructured}
To evaluate the efficiency of {\proposenameMaskshort} with unstructured pruning, we conduct experiments across multiple fusion architectures and datasets for 3D object detection and semantic segmentation.
Specifically, to evaluate the efficiency of {\proposenameMaskshort} on two-stage fusion architectures, we applied {\proposenameMaskshort} to AVOD-FPN~\cite{ku2018joint}, using the KITTI dataset with the task of 3D detection.
Besides, BEVfusion-mit~\cite{bevfusion-mit} and BEVfusion-pku~\cite{bevfusion-pku}, as two representative camera-LiDAR fusion models with unified BEV-based architectures, are applied with {\proposenameMaskshort} using the nuScenes dataset to validate the efficiency on both 3D detection and semantic segmentation tasks.
Additionally, to validate the robustness of {\proposenameMaskshort} with various backbones, we conducted experiments with alternative images and point backbone, including ResNet~\cite{he2016deep} and pointpillars~\cite{lang2019pointpillars}.

\begin{table*}[t!]
\caption{
\small{
\textbf{3D object Detection Performance Comparison with the state-of-the-art pruning methods} on the KITTI Validation dataset on Car class. We list the  models pruned by different approaches within 80\%, and 90\% pruning ratios. The baseline model is AVOD-FPN architecture.
}
}
\label{tab:detkitti}
\centering
\resizebox{1.0\linewidth}{!}{
\begin{tabular}{lcccccccccccc} 
\Xhline{1pt}
\multicolumn{1}{l}{\multirow{1}{*}{\textbf{Sparsity}}}  
& \multicolumn{6}{c}{80\%} & \multicolumn{6}{c}{90\%}\\ 
\cmidrule(lr){2-7} 
\cmidrule(lr){8-13} 
\multicolumn{1}{l}{\multirow{1}{*}{\textbf{Task}}}
& \multicolumn{3}{c}{AP-3D} & \multicolumn{3}{c}{AP-BEV}
& \multicolumn{3}{c}{AP-3D} & \multicolumn{3}{c}{AP-BEV}\\
\cmidrule(lr){2-4} 
\cmidrule(lr){5-7} 
\cmidrule(lr){8-10}
\cmidrule(lr){11-13} 
\multicolumn{1}{l}{\multirow{1}{*}{\textbf{Difficulty}}}
&Easy &Moderate &Hard
&Easy &Moderate &Hard 
&Easy &Moderate &Hard 
&Easy &Moderate &Hard \\
\hline
\multicolumn{1}{l}{\textbf{Car [No Pruning]}}
&82.4 &72.2 &66.5 
&89.4 &83.9 &78.7    
&-    &-    &-    
&-    &-    &-
\\ \hline
IMP
&65.8 &57.7 &51.3 
&69.2 &64.6 &59.7    
&52.1 &45.7 &43.2    
&59.6 &54.2	&51.7\\ 
SynFlow
&74.2 &65.7 &60.2 
&79.5 &75.3 &70.3
&64.5 &54.7 &48.1 
&73.5 &67.6	&64.4	\\ 
SNIP
&73.5 &64.9 &59.8 
&79.1 &75.8 &69.6 
&62.7 &52.3 &45.8 
&72.4 &66.9	&63.7 \\ 
ProsPr
&78.9 &69.6 &62.1 
&85.2 &79.1	&75.7
&74.2 &63.4 &59.1 
&81.2 &75.1	&71.9\\ 
\textbf{{\proposenameMaskshort} (Ours)}
&\textbf{80.5} & \textbf{70.2}  &\textbf{63.2}
&\textbf{87.2} & \textbf{81.5}  &\textbf{77.9}
& \textbf{77.4} &\textbf{68.2} & \textbf{62.3}  
& \textbf{85.3} &\textbf{79.9} & \textbf{75.8}\\ 
\Xhline{1pt} 
\end{tabular}
}
\vspace{-0.2cm}
\end{table*}

\begin{table*}[t]
\begin{minipage}[t]{0.32\linewidth} 
\vspace{0pt}
\centering
\caption{
\small{
\textbf{BEV Segmentation Performance Comparison} on the nuScene validation dataset.}
}
  \resizebox{1.0\linewidth}{!}{
  \begin{tabular}{lccc}
    \Xhline{1pt}
    \multicolumn{1}{l}{\multirow{2}{*}{\textbf{Sparsity}}}      
    & \multicolumn{1}{c}{80\%} & \multicolumn{1}{c}{85\%} & \multicolumn{1}{c}{90\%} \\
  \multicolumn{1}{c}{}
  & \multicolumn{1}{c}{mIoU} & \multicolumn{1}{c}{mIoU} & \multicolumn{1}{c}{mIoU} \\
  \hline 
    \multicolumn{1}{l}{\textbf{[No Pruning]}}
    &61.8  &- &- 
    \\ \hline
    IMP
    &53.2  
    &51.8 
    &49.9  \\ 
    SynFlow
    &56.5  
    &55.3 
    &53.1   \\ 
    SNIP
    &55.9  
    &54.9 
    &53.2   \\ 
    ProsPr
    &57.7 
    &56.2
    &54.1  \\ 
    \textbf{{\proposenameMaskshort} }
    &\textbf{60.7} 
    &\textbf{59.2}
    &\textbf{57.7} \\ 
    \Xhline{1pt}
\end{tabular} }
\label{tab:seg}
\end{minipage}%
\hfill
\begin{minipage}[t]{.32\linewidth} 
\vspace{0pt}
\centering
\caption{
\small{
\textbf{3D object Detection Performance with various backbones} on the nuScene validation dataset.}
}
\resizebox{1.0\linewidth}{!}{
\begin{tabular}{lccc}
    \Xhline{1pt}
    \multicolumn{1}{c}{\multirow{2}{*}{\textbf{Sparsity}}}      
    & \multicolumn{1}{c}{80\%} & \multicolumn{1}{c}{85\%} & \multicolumn{1}{c}{90\%} \\
    \multicolumn{1}{c}{}
    & \multicolumn{1}{c}{mAP} & \multicolumn{1}{c}{mAP} & \multicolumn{1}{c}{mAP} \\
     \hline  
    \multicolumn{1}{l}{[No Pruning] }
    &53.7  &- &- 
    \\  \hline 
    IMP
    &47.1  
    &43.3
    &37.8  \\ 
    SynFlow
    &49.5  
    &45.8 
    &40.3  \\ 
    SNIP
    &49.7  
    &45.5 
    &41.2   \\ 
    ProsPr
    &50.1  
    &47.5 
    &44.1   \\ 
    \textbf{\proposenameMaskshort\ }
    &\textbf{51.7}  
    &\textbf{50.6} 
    &\textbf{48.3}   \\ 
    \Xhline{1pt}
\end{tabular}}
\label{tab:backbone}
\end{minipage}
\hfill
\begin{minipage}[t]{.32\linewidth}
\vspace{0pt}
\centering
\caption{
\small{
\textbf{3D object Detection Performance of structure pruning} on the nuScene validation dataset.}
}
\resizebox{1.0\linewidth}{!}{
\begin{tabular}{lcccccccc} 
\Xhline{1pt}
\multicolumn{1}{l}{\multirow{2}{*}{\textbf{Sparsity}}}      
& \multicolumn{3}{c}{ResNet101 + SECOND}  \\   
\multicolumn{1}{c}{}
& mAP &  NDS & GFLOPs($\downarrow \%$)  \\
\hline
\multicolumn{1}{l}{\textbf{[No Pruning]}}
&64.6 &69.4 &610.66
\\ \hline
IMP-30\% &60.8 & 67.2 &428.7 (29.8)
\\ 
ProsPr-30\% &64.2 & 69.1 &413.4 (32.3)
\\ 
\textbf{{\proposenameMaskshort}-30\%}
& \textbf{65.3}  &\textbf{69.9} & \textbf{420.13 (31.2)}  \\ 
IMP-50\%  &57.6 & 65.2 & 297.39 (51.3) 
\\ 
ProsPr-50\% &62.5 & 68.4 &285.79 (53.2)
\\ 
\textbf{{\proposenameMaskshort}-50\% }
& \textbf{64.5}  &\textbf{69.5} & \textbf{264.42 (56.7)}  \\ 
\Xhline{1pt} 
\end{tabular}
}
\label{tab:struct}
\end{minipage}
\vspace{-0.2cm}
\end{table*}

\textbf{3D Object detection on nuScenes with BEV-based fusion Architectures.}
The experimental results are presented in Table~\ref{tab:detnus}.
Note that baseline models are BEVfusion-mit trained with SwinT and VoxelNet backbone.
As reported in Table~\ref{tab:detnus}, single-modal pruning methods, including IMP, SynFlow, SNIP, and ProsPr, experience significant declines in accuracy performance.
Even the ProsPr, considered the best-performing method among these single-modal pruning techniques, demonstrates the mAP decrease of 3.5\% in accuracy at the 80\% pruning ratio and 9.2\% at the 90\% pruning ratio on BEVfusion-mit.
Conversely, the incorporation of our {\proposenameMaskshort} yielded promising results.
For example, comparing with the baseline pruning method ProsPr, {\proposenameMaskshort} boosts the mAP of BEVfusion-mit by 
3.0\% (64.3\% $\rightarrow$ 67.3\%), 
3.6\% (61.9\% $\rightarrow$ 65.5\%), 
and 4.9\% (58.6\% $\rightarrow$ 63.5\%) for the three different pruning ratios.
Similarly, {\proposenameMaskshort} obtains much higher mAP and NDS than the other four pruning baselines with different pruning ratios on BEVFusion-mit and BEVFusion-pku.

\textbf{3D Object detection on KITTI with the two-stage fusion architecture.}
To validate the efficiency of {\proposenameMaskshort} on the two-stage detection fusion architecture, we conduct experiments with various pruning ratios on KITTI with AVOD-FPN architecture as the baseline.
The experimental results are presented in Table~\ref{tab:detkitti}.
Specifically, Table~\ref{tab:detkitti} presents the results for the car class on the KITTI, detailing AP-3D and AP-BEV across various difficulty levels including \textit{easy}, \textit{moderate}, and \textit{hard}.
Existing pruning methods experience significant declines in performance on different metrics of different difficulties.
Even the best-performing method among single-modal pruning methods, ProPr, shows a decrease in AP-3D of 3.5\%, 2.6\%, and 4.4\% in the \textit{easy}, \textit{moderate}, and \textit{hard} difficulty levels, respectively, at the 80\% pruning ratio.
Conversely, the {\proposenameMaskshort} has yielded promising results.
For instance, comparing with the pruning method ProsPr, {\proposenameMaskshort} enhances both the AP-3D and AP-BEV on AVOP-FPN by 
3.2\% (74.2\% $\rightarrow$ 77.4\%) and
3.9\% (81.2\% $\rightarrow$ 85.3\%) for \textit{easy} difficulties at the 90\% pruning ratio.
Furthermore, {\proposenameMaskshort} consistently outperforms the other four pruning baselines across various difficulties and pruning ratios on AVOD-FPN.
The comprehensive experimental results on 3D object detection validate the effectiveness of our {\proposenameMaskshort} across different camera-LiDAR fusion architectures.

\textbf{3D Semantic Segmentation on nuScenes}
To validate the robustness of our work, we extend our performance evaluation of {\proposenameMaskshort} to the semantic-centric BEV map segmentation task.
Note that baseline models are BEVfusion-mit trained with SwinT and VoxelNet backbone.
We use the nuScenes dataset and utilize the BEV map segmentation validation set. 
The pivotal evaluation metric for this task is the mean Intersection over Union (mIoU). 
with the experimental configuration detailed in the work by~\cite{bevfusion-mit}, we perform our evaluation on the BEVfusion-mit, as shown in Table~\ref{tab:seg}.
We observed that existing pruning methods still meet a significant accuracy drop by 8.6\% (IMP), 5.3\% (SynFlow), 5.9\% (SNIP), and 4.1\% (ProsPr) for the 80\% pruning ratio. 
Alternatively, our proposed approach yields significant advancements in performance.
Specifically, compared with the ProsPr, {\proposenameMaskshort} achieves a substantial enhancement by 3.0\% (57.7\% $\rightarrow$ 60.7\%), 3.0\% (56.2\% $\rightarrow$ 59.2\%), and 3.6\% (54.1\% $\rightarrow$ 57.7\%) for the three different pruning ratios.
These results empirically prove the efficacy of our pruning algorithms when applied to the BEV map segmentation task.

\textbf{Results on Various Backbone Architectures}
To comprehensively assess the efficacy of {\proposenameMaskshort}, we conducted experiments with alternative images and point backbones which will influence fusion. 
Specifically, we replaced the original VoxelNet backbone with Point Pillar and the SwinT backbone with ResNet101 on the architecture of BEVFusion-mit.
As depicted in Table~\ref{tab:backbone}, the results obtained from these experiments consistently demonstrate state-of-the-art performance with various pruning ratios.
Particularly noteworthy is the achievement of substantial 1.6\%, 3.1\%, and 4.2\% improvement compared to the ProsPr baseline under the pruning ratio of 80\%, 85\% and 90\%. 
This consistent improvement is observed across different pruning ratios, affirming the effectiveness of {\proposenameMaskshort} with different backbones employed. 
These outcomes robustly demonstrate the general applicability of {\proposenameMaskshort} to various backbone architectures.

\subsection{Structure Pruning Results on 3D object Detection}
\label{sec::exp::structure}
To assess the efficacy of our proposed pruning approach in structure pruning, we conducted experiments with BEVfusion-mit models with ResNet101 as the camera backbone and SECOND as the LiDAR backbones.
Specifically, We measured the performance of the pruned networks with a similar amount of FLOP reductions and reported the number of FLOPs(‘GFLOPs’).
As depicted in Table~\ref{tab:struct}, the results obtained from these experiments consistently demonstrate state-of-the-art performance with various pruning sparsities.
Our evaluations at 30\% and 50\% pruning sparsities reveal that {\proposenameMaskshort} not only maintains a competitive mAP and NDS but also achieves a substantial reduction in computational overhead.
Notably, with the 30\% pruning sparsities, {\proposenameMaskshort} achieves a 0.7\% on mAP and 0.5\% on NDS compared with the unpruned baseline models, which reveals that removing similar feature redundancy improves the efficiency of models.
Specifically, compared with the ProsPr baseline, {\proposenameMaskshort} achieves a substantial enhancement by 1.1\% (64.6\% $\rightarrow$ 65.3\%), and 2.0\% (62.5\% $\rightarrow$ 64.5\%), for the two different pruning sparsities.

\section{Discussion and Conclusion}
\label{sec:conclusion}
Although our approach identifies similar feature redundancy in camera-LiDAR fusion models, it is limited to the perception field. 
Extending it to other multi-modal models, such as vision-language models, requires further research. 
Fusion modules across various modalities exhibit different functionalities. 
In multi-sensor fusion models (camera, LiDAR, and Radar), the focus is on supplementing and spatially aligning data by leveraging the sensors' physical properties, fusing low-level features. 
However, in models with disparate data types like vision and language, fusion modules focus on matching high-level semantic contexts. 
Therefore, {\proposenameMaskshort} primarily addresses redundancy from supplementary functionality in multi-sensor fusion perception architectures.

In this paper, we explore the computation reduction of camera-LiDAR fusion models. A pruning framework {\proposenameMaskshort} is introduced to address redundancy in these models. {\proposenameMaskshort} employs alternative masking on each modality and observes loss changes when certain modality parameters are activated and deactivated. These observations are integral to our importance scores evaluation function {\proposenameImportanceshort}. Through extensive evaluation, our proposed framework {\proposenameMaskshort} achieves better performance, surpassing the baselines established by single-modal pruning methods.

\bibliographystyle{unsrt}
\bibliography{ref}

\newpage
\appendix

\section{Derivation of Eqn.\ref{eqn:drop_rate} in Section~\ref{sec:importance}}
\label{append:pro}
\begin{proposition}
\label{pro:approx} For a camera-LiDAR fusion model with parameters $\theta_c$ for the camera backbone and parameters $\theta_l$ for the LiDAR backbone, we can mask one of the backbones using masks $\mu_l = 0$ for the LiDAR backbone and $\mu_c = 0$ for the camera backbone.
Take the models with masking LiDAR backbone as instance.
With a sufficiently small learning rate $\epsilon$, the masked model is trained with batches of data $\mathcal{D}_i$, $i \in \{ 1,2, ..., B\} $ sampled from the dataset $\mathcal{D}$. 
We assume the parameters update from $\theta_{c, 0}$ to $\theta_{c, B}$, and the loss changes from $\mathcal{L}_m(\mu_l=0)$ to $\mathcal{L}_m(\mu_l=0; \mathcal{D}_B)$.
Then we could get the equation denoted as:
\begin{equation}
\frac{\partial \mathcal{L}_m(\mu_l = 0; \mathcal{D}_B)}{\partial \theta_{c, 0}^i} \cdot \theta_{c, 0}^i
\approx \frac{\partial \mathcal{L}_m(\mu_l = 0, \mathcal{D}_B)}{\partial \theta_{c,B}^i}  \cdot \theta_{c, 0}^i .
\end{equation}
\end{proposition}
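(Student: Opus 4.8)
The plan is to verify the factorization already displayed in Eqn.~\ref{eqn:drop_rate} and then show that the accumulated Jacobian product collapses to unity in the small-learning-rate regime. The starting point is the gradient-descent recursion that governs \emph{Redundancy Reactivation}: with the LiDAR backbone masked, each batch $\mathcal{D}_j$ updates the camera parameters via $\theta_{c,j}^i = \theta_{c,j-1}^i - \epsilon\,\partial\mathcal{L}_m(\mu_l=0;\mathcal{D}_j)/\partial\theta_{c,j-1}^i$. Applying the chain rule across the $B$ update steps expresses the gradient at the initial parameters $\theta_{c,0}^i$ as the gradient at the final parameters $\theta_{c,B}^i$ times the product $\prod_{j=1}^{B}\partial\theta_{c,j}^i/\partial\theta_{c,j-1}^i$, which is exactly the equality asserted in Eqn.~\ref{eqn:drop_rate}. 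It therefore suffices to prove that this product is approximately $1$.

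First I would differentiate the update rule to obtain the per-step factor. Differentiating $\theta_{c,j}^i$ with respect to $\theta_{c,j-1}^i$ gives
\begin{equation}
\frac{\partial\theta_{c,j}^i}{\partial\theta_{c,j-1}^i} = 1 - \epsilon\,\frac{\partial^2\mathcal{L}_m(\mu_l=0;\mathcal{D}_j)}{\partial(\theta_{c,j-1}^i)^2} = 1 - \epsilon\,H_j,
\end{equation}
where $H_j$ abbreviates the second-order curvature term at step $j$, assumed uniformly bounded. Multiplying these $B$ factors and expanding,
\begin{equation}
\prod_{j=1}^{B}\bigl(1 - \epsilon\,H_j\bigr) = 1 - \epsilon\sum_{j=1}^{B}H_j + O(\epsilon^2),
\end{equation}
so the product equals $1 + O(\epsilon)$ whenever $B$ is fixed and the $H_j$ are bounded. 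Substituting this back into Eqn.~\ref{eqn:drop_rate} and multiplying through by $\theta_{c,0}^i$ yields the claimed approximation, with the neglected remainder of order $\epsilon$ vanishing as $\epsilon\to 0$.

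The main obstacle is the justification of the scalar chain rule itself: the update of $\theta_{c,j}^i$ depends on all camera coordinates through the full gradient, so $\partial\theta_{c,j}^i/\partial\theta_{c,j-1}^i$ is strictly the $(i,i)$ entry of a Jacobian matrix $I - \epsilon\,\nabla^2\mathcal{L}_m$, and the honest accumulation is the matrix product $\prod_{j=1}^{B}(I - \epsilon\,\nabla^2\mathcal{L}_m)$ rather than a product of scalars. The cleaner route is to argue at the matrix level: for fixed $B$ and bounded Hessians this product equals $I - \epsilon\sum_{j=1}^{B}\nabla^2\mathcal{L}_m + O(\epsilon^2) = I + O(\epsilon)$, so its diagonal entries tend to $1$ and its off-diagonal entries tend to $0$ as $\epsilon\to 0$; restricting to the $(i,i)$ component then recovers the per-coordinate statement. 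Care is also needed to state the boundedness hypothesis on the curvature and the fixedness of $B$ explicitly, since without these the $O(\epsilon)$ remainder need not be uniformly small and the approximation could degrade as the number of reactivation batches grows.
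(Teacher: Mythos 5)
Your proposal is correct and follows essentially the same route as the paper's proof: expand $\partial\mathcal{L}_m(\mu_l=0;\mathcal{D}_B)/\partial\theta_{c,0}^i$ via the chain rule into the gradient at $\theta_{c,B}^i$ times the product of per-step Jacobians, differentiate the gradient-descent update to write each factor as $\mathbf{I}-\epsilon\,\partial^2\mathcal{L}_m/\partial(\theta_{c,j-1}^i)^2$, and drop the $O(\epsilon)$ curvature terms for sufficiently small $\epsilon$ (the paper cites ProsPr for this step). Your closing remark about the scalar-versus-matrix chain rule is a fair observation of a looseness that the paper itself shares (it writes the identity matrix $\mathbf{I}$ while indexing per coordinate), and your explicit boundedness-of-curvature and fixed-$B$ hypotheses make the approximation slightly more honest, but the underlying argument is the same.
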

\begin{proof}
We can extend the left side of the equation using the chain rule, denoted as,
\begin{equation}
\label{eqn:proof_1}
  \begin{split}
    \frac{\partial \mathcal{L}_m(\mu_l = 0; \mathcal{D}_B)}{\partial \theta_{c, 0}^i} \cdot  \theta_{c, 0}^i 
    &=  \frac{\partial \mathcal{L}_m(\mu_l = 0; \mathcal{D}_B)}{\partial \theta_{c, B}^i} 
    \cdot \frac{\partial \theta_{c, B}^i }{\partial \theta_{c, 0}^i} \cdot\theta_{c, 0}^i
    \\
    &= 
    \frac{\partial \mathcal{L}_m(\mu_l = 0; \mathcal{D}_B)}{\partial \theta_{c, B}^i} \cdot
    \frac{\partial \theta_{c, B}^i }{\partial \theta_{c, B-1}^i} 
    \cdot ...  \cdot
    \frac{\partial \theta_{c, 2}^i}{\partial \theta_{c, 1}^i}  \cdot \frac{\partial \theta_{c, 1}^i }{\partial \theta_{c, 0}^i} \cdot \theta_{c,0}^i
    \\
    &= \frac{\partial \mathcal{L}_m(\mu_l = 0; \mathcal{D}_B)}{\partial \theta_{c, B}^i } \cdot
    \bigg[ \prod_{j=1}^{B} \frac{\partial \theta_{c,j}^i }{\partial \theta_{c, j - 1}^i} \bigg] \cdot 
    \theta_{c, 0}^i  .
  \end{split}
\end{equation}
Due to the updates with the learning rate $\epsilon$, we can represent $\theta_{c, j}^i$ as follows:
\begin{equation}
\label{eqn:update}
 \theta_{c, j}^i = \theta_{c, j -1}^{i} - \epsilon \cdot \frac{\partial \mathcal{L}_m(\mu_l = 0; \mathcal{D}_{j-1}) }{\partial \theta_{c, j-1}^{i}} .
\end{equation}
Combining with Eqn.~\ref{eqn:proof_1} and Eqn.~\ref{eqn:update}, we could derive as following:
\begin{equation}
  \begin{split}
     \frac{\partial \mathcal{L}_m(\mu_l = 0; \mathcal{D}_B)}{\partial \theta_{c, 0}^i} \cdot  \theta_{c, 0}^i 
    &= \frac{\partial \mathcal{L}_m(\mu_l = 0; \mathcal{D}_B)}{\partial \theta_{c, B}^i } \cdot
    \bigg[ \prod_{j=1}^{B} \frac{\partial \theta_{c,j}^i }{\partial \theta_{c, j - 1}^i} \bigg] \cdot 
    \theta_{c, 0}^i \\
    &= \frac{\partial \mathcal{L}_m(\mu_l = 0; \mathcal{D}_B)}{\partial \theta_{c, B}^i} \cdot
    \bigg[ \prod_{j=1}^{B} \frac{\partial \theta_{c, j-1}^{i} - \epsilon \frac{\partial \mathcal{L}_m(\mu_l = 0; \mathcal{D}_{j-1}) }{ \partial \theta_{c, j-1}^{i} } \ } {\partial \theta_{c, j - 1}^i }  \bigg]
    \cdot  \theta_{c, 0}^i
    \\ 
    &= \frac{\partial \mathcal{L}_m(\mu_l = 0; \mathcal{D}_B)}{\partial \theta_{c, B}^i} \cdot
    \bigg[ \prod_{j=1}^{B} \mathbf{I} - \epsilon \frac{\partial^2 \mathcal{L}_m(\mu_l = 0; \mathcal{D}_{j-1}) }{ \partial (\theta_{c, j-1}^{i})^2 } \bigg]
      \cdot  \theta_{c, 0}^i ,
  \end{split}
\end{equation}
where $\mathbf{I}$ represents the identity matrix, and $\partial^2$ represents the second-order derivative. 
By dropping the terms with the sufficiently small learning rate $\epsilon$ referring to \cite{prune-propr}, the approximation is as follows:
\begin{equation}
  \begin{split}
     \frac{\partial \mathcal{L}_m(\mu_l = 0; \mathcal{D}_B)}{\partial \theta_{c, 0}^i} \cdot  \theta_{c, 0}^i 
    &= \frac{\partial \mathcal{L}_m(\mu_l = 0; \mathcal{D}_B)}{\partial \theta_{c,B}^i} \cdot
    \bigg[ \prod_{j=1}^{B} \mathbf{I} - \epsilon \frac{\partial^2 \mathcal{L}_m(\mu_l = 0; \mathcal{D}_{j-1}) }{ \partial (\theta_{c, j-1}^{i})^2 } \bigg]
      \cdot  \theta_{c, 0}^i
    \\ 
     &\approx \frac{\partial \mathcal{L}_m(\mu_l = 0, \mathcal{D}_B)}{\partial \theta_{c,B}^i} \cdot
    \bigg[ \prod_{j=1}^{B} \mathbf{I} \bigg]
      \cdot \theta_{c, 0}^i\\
    &=  \frac{\partial \mathcal{L}_m(\mu_l = 0, \mathcal{D}_B)}{\partial \theta_{c,B}^i}  \cdot \theta_{c, 0}^i .
  \end{split}
\end{equation}
And we finally prove that 
\begin{equation}
\frac{\partial \mathcal{L}_m(\mu_l = 0; \mathcal{D}_B)}{\partial \theta_{c, 0}^i} \cdot \theta_{c, 0}^i
\approx \frac{\partial \mathcal{L}_m(\mu_l = 0, \mathcal{D}_B)}{\partial \theta_{c,B}^i}  \cdot \theta_{c, 0}^i .
\end{equation}
\end{proof}

\section{Detailed {\proposenameImportanceshort} for parameters in the LiDAR backbone}
\label{appendix:lidar}
Due to the page limits, in Section~\ref{sec:importance}, we only introduce the detailed formulation of the camera backbone. 
In this section, we will complete the details of the {\proposenameImportanceshort} of parameters from the LiDAR backbone,  with two indicators {\proposenameImportParamEvashort} and {\proposenameRedundantParamEvashort}.

\textbf{\proposenameImportParamEva .} Similar to the parameters of camera backbones, the loss changes when parameters are deactivating via masking is observed, denoted as follows,
\begin{equation}
\small
\label{eqn:selfmask_change_lidar}
\hat{\Phi}_{\theta_l^i} = |\mathcal{L}_m(;\mathcal{D}) - \mathcal{L}_m(\mu_l^i = 0; \mathcal{D})| .
\end{equation}
Due to the enormous number of parameters, we generalize this deactivating approach to encompass the entire LiDAR backbone.
We then denote the new evaluation function as follows:
\begin{equation}
\hat{\Phi}_{\theta_l} = |\mathcal{L}_m(;\mathcal{D}) - \mathcal{L}_m(\mu_l = 0; \mathcal{D})| .
\end{equation}
Then, to accommodate the need for differentiation among various parameters, we apply the Taylor first-order expansion to $\hat{\Phi}_{\theta_l}$ on each individual parameter $\theta_l^i$ in the camera backbone, considering $\theta_l = \{ \theta_l^1, ..., \theta_c^{N_l} \}$. 
\begin{equation}
\small
\label{eqn:ta_contri_app}
\hat{\Phi}_{\theta_l^i} 
=  \big| \mathcal{L}_m(;\mathcal{D})+ \mu_l^i \odot \theta_l^i \cdot \frac{\partial\mathcal{L}_m(;\mathcal{D})}{\partial \theta_l^i} -\mathcal{L}_m(\mu_l = 0; \mathcal{D})  - \mu_l^i \odot \theta_l^i \cdot \frac{\partial \mathcal{L}_m(\mu_l = 0; \mathcal{D})}{\partial \theta_l^i} \big| .
\end{equation}
Since $\mu_l^i = 0$ when $\theta_l^i$ is deactivating, and constant loss values can be disregarded when considering importance scores on a global scale, the final indicator of each parameter's contribution, represented by our {\proposenameImportParamEvashort}, can be expressed as follows:
\begin{equation}
\small
\label{eqn:equal_proof_lidar}
\hat{\Phi}_{\theta_l^i}  = \big|\theta_l^i \cdot \frac{\partial\mathcal{L}_m(;\mathcal{D})}{\partial \theta_l^i} \big| .
\end{equation}

\textbf{\proposenameRedundantParamEva.} 
Similarly, aiming to reactivate fusion-redundant parameters in the camera backbone, the camera backbone will first be masked with $\mu_c = 0$ and then masked models are trained with sampled batches.
Throughout this process, the loss evolves from $\mathcal{L}_m(\mu_c=0; \mathcal{D}_1)$ to $\mathcal{L}_m(\mu_c=0; \mathcal{D}_B)$, and the parameters evolve from $\theta_{l,0}$ (i.e. $\theta_l$) to $\theta_{l,B}$.
The loss changes and parameter differences are observed via our {\proposenameRedundantParamEvashort}, denoted as follows:
\begin{equation}
\small
\begin{split}
        \Tilde{\Phi}_{\theta_l}  &= | \mathcal{L}_m(\mu_c=0;\mathcal{D}) - \mathcal{L}_m(\mu_c=0; \mathcal{D}_1) + ... + \mathcal{L}_m(;\mathcal{D}_{B-1}) - \mathcal{L}_m(\mu_c=0; \mathcal{D}_B) | \\
        &= | \mathcal{L}_m(\mu_c=0;\mathcal{D}) - \mathcal{L}_m(\mu_c=0; \mathcal{D}_B) | .
\end{split}
\end{equation}
Then, we apply the first-order Taylor expansion to the initial $i$-th parameters $\theta_{l,0}^i$, denoted as:
\begin{equation}
\small
\begin{split}
\Tilde{\Phi}_{\theta_{l, 0}^i}  =  
\big| \mathcal{L}_m(\mu_l=0;\mathcal{D})
&+ \mu_l^i\odot \theta_{l, 0}^i \cdot \frac{\partial\mathcal{L}_m(\mu_c=0;\mathcal{D})}{\partial \theta_{l, 0}^i } \\
&- \mathcal{L}_m(\mu_l = 0; \mathcal{D}_B)  
- \mu_l^i\odot \theta_{l, 0}^i \cdot \frac{\partial \mathcal{L}_m(\mu_c = 0; \mathcal{D}_B)}{\partial \theta_{l, 0}^i} \big| .
\end{split}
\end{equation}
According to Proposition~\ref{pro:approx}, we eliminate the identical parts and apply the known value of $\mu_c = 1$, and the final formulation could be denoted as,
\begin{equation}
\small
\label{eqn:reactive_proof_lidar}
\Tilde{\Phi}_{\theta_l^i}  =  \big| \theta_l^i \cdot \frac{\partial\mathcal{L}_m(\mu_c=0;\mathcal{D})}{\partial \theta_l^i } - \theta_l^i \cdot \frac{\partial \mathcal{L}_m(\mu_c = 0; \mathcal{D}_B)}{\partial \theta_{l,B}^i} \big|  .
\end{equation}
Therefore, with the combination Eqn.~\ref{eqn:equal_proof_lidar} and Eqn.~\ref{eqn:reactive_proof_lidar}, the final importance scores evaluation function {\proposenameMaskshort} of the LiDAR backbones could be presented with a normalization:
\begin{equation}
\small
\label{eqn:lidar_scores_append}
\mathcal{S}(\theta_l^i) = \alpha\cdot\frac{\hat{\Phi}_{\theta_l^i}}{ \sum_{j = 0}^{N_l} \hat{\Phi}_{\theta_l^j} } 
- \beta\cdot\frac{\Tilde{\Phi}_{\theta_l^i}}{ \sum_{j = 0}^{N_l} \Tilde{\Phi}_{\theta_l^j} }.
\end{equation}

\section{Detailed {\proposenameImportanceshort} for parameters in the Fusion Modules and Following Task Heads}
\label{appendix:fusion}
Similar with Appendix~\ref{appendix:lidar}, in this section, we will complete the details of the {\proposenameImportanceshort} of parameters from the fusion backbone,  with two indicators {\proposenameImportParamEvashort} and {\proposenameRedundantParamEvashort}.

\textbf{\proposenameImportParamEva .} Similar to the parameters of the camera and LiDAR backbones, the loss changes when parameters are deactivating via masking is observed and then expand with a Taylor first-order expansion, denoted as follows,
\begin{equation}
\small
\label{eqn:ta_contri_append}
\hat{\Phi}_{\theta_f^i} 
=  \big| \mathcal{L}_m(;\mathcal{D})+ \mu_f^i \odot \theta_f^i \cdot \frac{\partial\mathcal{L}_m(;\mathcal{D})}{\partial \theta_f^i} -\mathcal{L}_m(\mu_f = 0; \mathcal{D})  - \mu_f^i \odot \theta_f^i \cdot \frac{\partial \mathcal{L}_m(\mu_f = 0; \mathcal{D})}{\partial \theta_f^i} \big| .
\end{equation}
Then, after simplifying the constant loss and masked (deactivated) terms, formulation {\proposenameImportParamEvashort} can be expressed as follows:
\begin{equation}
\small
\label{eqn:equal_proof_fusion}
\hat{\Phi}_{\theta_f^i}  = \big|\theta_f^i \cdot \frac{\partial\mathcal{L}_m(;\mathcal{D})}{\partial \theta_f^i} \big| .
\end{equation}

\textbf{\proposenameRedundantParamEva.} 
Different from the above camera backbone formulation in Section~\ref{sec:importance} and LiDAR backbone formulation in Appendix~\ref{appendix:lidar}, the fusion backbone experience the both alternative masking stages of {\proposenameImportanceshort}.
Therefore, the {\proposenameRedundantParamEvashort} of parameters in fusion modules and following task heads are calculated twice, denoted as,
\begin{align}
   \Tilde{\Phi}_{\theta_f}(\mu_c = 0) 
&= | \mathcal{L}_m(\mu_c=0;\mathcal{D}) - \mathcal{L}_m(\mu_c=0; \mathcal{D}_B) | ,
\\
   \Tilde{\Phi}_{\theta_f}(\mu_f = 0)  
&= | \mathcal{L}_m(\mu_f=0;\mathcal{D}) - \mathcal{L}_m(\mu_f=0; \mathcal{D}_B) | ,
\end{align} 
Specifically, when loss evolves from $\mathcal{L}_m(\mu_c=0; \mathcal{D}_1)$ to $\mathcal{L}_m(\mu_c=0; \mathcal{D}_B)$ with masking camera backbone, the parameters evolve from $\theta_{f,0,\mu_c=0}$ (i.e. $\theta_f$)) to $\theta_{f, B, \mu_c=0}$.
Meanwhile, when loss evolves from $\mathcal{L}_m(\mu_l=0; \mathcal{D}_1)$ to $\mathcal{L}_m(\mu_l=0; \mathcal{D}_B)$ with masking  backbone, the parameters evolve from $\theta_{f,0, \mu_l=0}$ (i.e. $\theta_f$) to $\theta_{f, B, \mu_l=0}$.
Then, we apply the first-order Taylor expansion to the initial $i$-th parameters $\theta_{l,0}^i$, denoted as:
\begin{align}
\begin{split}
\Tilde{\Phi}_{\theta_{f}^i}(\mu_c = 0)  =  
\big| \mathcal{L}_m(\mu_c=0;\mathcal{D})
&+ \mu_f^i\odot \theta_{f, 0, \mu_c=0 }^i \cdot \frac{\partial\mathcal{L}_m(\mu_c=0;\mathcal{D})}{\partial \theta_{f, 0, \mu_c=0}^i } \\
&- \mathcal{L}_m(\mu_c = 0; \mathcal{D}_B)  
- \mu_f^i\odot \theta_{f, 0, \mu_c=0}^i \cdot \frac{\partial \mathcal{L}_m(\mu_c = 0; \mathcal{D}_B)}{\partial \theta_{f, 0, \mu_c=0}^i} \big| ,
\end{split} \\
\begin{split}
\Tilde{\Phi}_{\theta_{f}^i}(\mu_l = 0)   =  
\big| \mathcal{L}_m(\mu_l=0;\mathcal{D})
&+ \mu_f^i\odot \theta_{f, 0, \mu_l=0 }^i \cdot \frac{\partial\mathcal{L}_m(\mu_l=0;\mathcal{D})}{\partial \theta_{f, 0, \mu_l=0}^i } \\
&- \mathcal{L}_m(\mu_l = 0; \mathcal{D}_B)  
- \mu_f^i\odot \theta_{f, 0, \mu_l=0}^i \cdot \frac{\partial \mathcal{L}_m(\mu_l = 0; \mathcal{D}_B)}{\partial \theta_{f, 0, \mu_l=0}^i} \big| .
\end{split}
\end{align}
According to Proposition~\ref{pro:approx}, we eliminate the identical parts and apply the known value of $\mu_c = 1$, and the final formulation could be denoted as,
\begin{align}
\label{eqn:reactive_proof_fusion}
\Tilde{\Phi}_{\theta_f^i}(\mu_c = 0)  &=  \big| \theta_f^i \cdot \frac{\partial\mathcal{L}_m(\mu_c=0;\mathcal{D})}{\partial \theta_f^i } - \theta_f^i \cdot \frac{\partial \mathcal{L}_m(\mu_c = 0; \mathcal{D}_B)}{\partial \theta_{f,B,\mu_c=0}^i} \big|  ,\\
\Tilde{\Phi}_{\theta_l^i}(\mu_l =0)  &=  \big| \theta_f^i \cdot \frac{\partial\mathcal{L}_m(\mu_l=0;\mathcal{D})}{\partial \theta_f^i } - \theta_f^i \cdot \frac{\partial \mathcal{L}_m(\mu_l = 0; \mathcal{D}_B)}{\partial \theta_{f,B, \mu_l=0}^i} \big|  .
\end{align}
Specifically, since the {\proposenameRedundantParamEvashort} are calculated twice for parameters in fusion models, we using the hyperparameters ($\beta / 2$) to control the normalization scale of {\proposenameMaskshort} of fusion modules.
Formally, with the combination Eqn.~\ref{eqn:equal_proof_fusion} and Eqn.~\ref{eqn:reactive_proof_fusion}, the final importance scores evaluation function {\proposenameMaskshort} of the LiDAR backbones could be presented with a normalization:
\begin{equation}
\label{eqn:fusion_scores_append}
\mathcal{S}(\theta_f^i) = \alpha\cdot\frac{\hat{\Phi}_{\theta_f^i}}{ \sum_{j = 0}^{N_f} \hat{\Phi}_{\theta_f^j} } 
- \frac{\beta}{2} \cdot\frac{ \Tilde{\Phi}_{\theta_f^i}(\mu_l = 0) }
{ \sum_{j = 0}^{N_f} \Tilde{\Phi}_{\theta_f^j} (\mu_l = 0) } 
- \frac{\beta}{2}\cdot\frac{\Tilde{\Phi}_{\theta_f^i}(\mu_c = 0) }
{ \sum_{j = 0}^{N_f} \Tilde{\Phi}_{\theta_f^j} (\mu_c = 0)} .
\end{equation}

\section{Pseudo Code of {\proposenameMaskshort} }
\label{append:pseudo}
\begin{algorithm}[h]
\small
\caption{\proposenameMask}
\label{alg:fusion}
\textbf{Input:}  Pruning ratio $\rho$; iteration steps n; Networks \{$\mathbf{F}_l$,$\mathbf{F}_c$, $\mathbf{F}_f$\}; Related parameters \{$\theta_l$, $\theta_c$, $\theta_f$\}; Dataset $\mathcal{D}$
\begin{algorithmic}[1]
\State Initialise binary mask $\mu_c$, $\mu_l$; $\theta_c =\mu_c \odot \theta_{c}^{init}$, $\theta_l =\mu_l \odot \theta_{l}^{init}$. \Comment{Initialise Masks}
\For{$m\in\{l, c\}$}
\State $\mu_m \xleftarrow{} 0$ \Comment{Alternative Modality Masking}
\State Train Masked Model with sampled batches from $\mathcal{D}_1$ to $\mathcal{D}_B$ \Comment{Redundancy Reactivation}
\State Update importance scores with {\proposenameImportanceshort} by Eqn {\ref{eqn:camera_scores}}, {\ref{eqn:lidar_scores}} and {\ref{eqn:fusion_scores}} \Comment{Importance Evaluation}
\State $\theta \gets \theta^{init}$ \Comment{Reinitialization}
\EndFor
\State Threshold $\tau \gets (1 - \rho)$ percentile of $\mathcal{S}(\theta)$ \Comment{Set pruning threshold}
\State $\mu \gets (\tau \le \mathcal{S}(\theta))$ \Comment{Set pruning mask}
\State $\theta = \mu \odot \theta^{init}$  \Comment{Apply mask on initial parameters}
\State Finetune models with Dataset $\mathcal{D}$ \Comment{Train Pruned model}
\end{algorithmic} 
\end{algorithm}
The overview of our framework {\proposenameMaskshort} and the importance scores evaluation function  {\proposenameImportanceshort} are respectively introduced in Section~\ref{sec:overview} and Section~\ref{sec:importance}.
Specifically, {\proposenameMaskshort} employs alternative masking on each modality, followed by the observation of loss changes when certain modality parameters are activated and deactivated.
These observations serve as important indications to identify fusion-redundant parameters, which are integral to our importance scores evaluation function, {\proposenameImportanceshort}.
{\proposenameMaskshort} begins with \textit{Modality Masking}, where one of the backbones is initially masked. 
This step is followed by \textit{Redundancy Reactivation} and \textit{Importance Evaluation}, where the parameter importance scores are initially calculated with {\proposenameImportanceshort} including the computation of {\proposenameImportParamEvashort} and {\proposenameRedundantParamEvashort}.
Afterward, the models undergo \textit{Reinitialization} and \textit{Alternative Masking} of the other backbone, leading to another round of \textit{Redundancy Reactivation} and \textit{Importance Evaluation}. 
When scores of all parameters in backbones are calculated fully with {\proposenameImportanceshort}, models are pruned to remove parameters with low importance scores and then finetuned. 

We have also extended {\proposenameMaskshort} to accommodate structured pruning, where instead of pruning individual weights, entire channels (or columns in linear layers) are removed. 
Although this approach imposes more restrictions, it significantly enhances memory efficiency and reduces the computational costs associated with training and inference.
Adapting {\proposenameMaskshort} to structured pruning involves simply modifying the shape of the pruning mask $\mu$ and the parameter $\theta$ in formulation to represent each channel (or column of the weight matrix).
For experimental results, please refer to Section~\ref{sec::exp::structure}.

\section{Complete Experimental Results of KITTI dataset}
\label{append:kitti}
As detailed in Section~\ref{sec::exp::baselinemodel}, we conducted experiments using AVOD-FPN~\cite{ku2018joint} on the KITTI dataset~\cite{kitti} to demonstrate the efficacy of our {\proposenameMaskshort} on two-stage fusion architectures.
Due to space constraints, we initially presented only a subset of our results on the KITTI dataset in Section~\ref{sec::exp::unstructured}, specifically excluding the results for the Pedestrian and Cyclist classes.
In this section, we aim to provide a comprehensive view of our findings on KITTI. The complete experimental results are displayed in Table~\ref{tab:detkitti_full}.
Specifically, Table~\ref{tab:detkitti_full} discloses results for all classes (Car, Pedestrian, and Cyclist) on the KITTI dataset, detailing AP-3D and AP-BEV accuracy across varying difficulty levels, including \textit{easy}, \textit{moderate}, and \textit{hard}.
As reported in Table~\ref{tab:detkitti_full}, single-modal pruning methods, including IMP, SynFlow, SNIP, and ProPr, experience significant declines in accuracy performance in all classes.
Conversely, the incorporation of our {\proposenameMaskshort} yielded promising results.

\begin{table*}[t!]
\caption{
\textbf{3D object Detection Performance Comparison with the state-of-the-art pruning methods} on the KITTI Validation dataset on Car class. We list the  models pruned by different approaches within 80\%, and 90\% pruning ratios. The baseline model is trained with AVOD-FPN architecture.
}
\label{tab:detkitti_full}
\centering
\resizebox{1.0\linewidth}{!}{
\begin{tabular}{lcccccccccccc} 
\Xhline{1pt}
\multicolumn{1}{l}{\multirow{1}{*}{\textbf{Sparsity}}}  
& \multicolumn{6}{c}{80\%} & \multicolumn{6}{c}{90\%}\\ 
\cmidrule(lr){2-7} 
\cmidrule(lr){8-13} 
\multicolumn{1}{l}{\multirow{1}{*}{\textbf{Tasks}}}
& \multicolumn{3}{c}{AP-3D} & \multicolumn{3}{c}{AP-BEV}
& \multicolumn{3}{c}{AP-3D} & \multicolumn{3}{c}{AP-BEV}\\
\cmidrule(lr){2-4} 
\cmidrule(lr){5-7} 
\cmidrule(lr){8-10}
\cmidrule(lr){11-13} 
\multicolumn{1}{l}{\multirow{1}{*}{\textbf{Sparsity}}}
&Easy &Moderate &Hard
&Easy &Moderate &Hard 
&Easy &Moderate &Hard 
&Easy &Moderate &Hard \\
\hline
\multicolumn{1}{l}{\textbf{Car [No Pruning]}}
&82.4 &72.2 &66.5 
&89.4 &83.9 &78.7    
&-    &-    &-    
&-    &-    &-
\\ \hline
IMP
&65.8 &57.7 &51.3 
&69.2 &64.6 &59.7    
&52.1 &45.7 &43.2    
&59.6 &54.2	&51.7\\ 
SynFlow
&74.2 &65.7 &60.2 
&79.5 &75.3 &70.3
&64.5 &54.7 &48.1 
&73.5 &67.6	&64.4	\\ 
SNIP
&73.5 &64.9 &59.8 
&79.1 &75.8 &69.6 
&62.7 &52.3 &45.8 
&72.4 &66.9	&63.7 \\ 
ProPr
&78.9 &69.6 &62.1 
&85.2 &79.1	&75.7
&74.2 &63.4 &59.1 
&81.2 &75.1	&71.9\\ 
\textbf{{\proposenameMaskshort} (Ours)}
&\textbf{80.5} & \textbf{70.2}  &\textbf{63.2}
&\textbf{87.2} & \textbf{81.5}  &\textbf{77.9}
& \textbf{77.4} &\textbf{68.2} & \textbf{62.3}  
& \textbf{85.3} &\textbf{79.9} & \textbf{75.8}\\ 
\hline
\multicolumn{1}{c}{\textbf{Pedestrian [No Pruning]}}
&50.5 &43.2 &40.1 
&58.1 &50.7 &47.2    
&-    &-    &-    
&-    &-    &-
\\ \hline
IMP
&34.3 &27.8 &23.5 
&40.9 &32.5 &29.9   
&28.7 &20.2 &16.4 
&34.1 &29.2 &25.8   \\
SynFlow
&43.7 &36.3 &32.3 
&50.2 &45.1 &41.5
&33.9 &25.4 &20.6 
&43.5 &38.9	&35.6	\\ 
SNIP
&43.4 &35.5 &31.8 
&49.5 &44.9 &38.6 
&32.7 &23.7 &19.8 
&41.9 &37.4	&34.1 \\ 
ProPr
&46.9 &38.1 &34.7 
&54.9 &49.4	&44.2
&43.8 &36.9 &33.5 
&51.2 &44.7	&39.9\\ 
\textbf{{\proposenameMaskshort} (Ours)}
&\textbf{49.4} & \textbf{41.2}  &\textbf{37.3}
&\textbf{57.0} & \textbf{49.7}  &\textbf{46.5}
& \textbf{47.8} &\textbf{39.6} & \textbf{36.1}  
& \textbf{55.9} &\textbf{46.2} & \textbf{43.2}\\ \hline
\multicolumn{1}{l}{\textbf{Cyclist [No Pruning]}}
&63.8 &51.7 &45.2 
&67.6 &57.2 &50.4    
&-    &-    &-    
&-    &-    &-
\\ \hline
IMP
&47.2 &36.8 &30.7 
&51.2 &41.4 &35.7   
&39.2 &31.5 &27.3  
&45.5 &36.9 &31.2   \\
SynFlow
&56.2 &44.5 &36.5 
&59.1 &48.6	&42.2
&47.5 &36.2 &31.1  
&54.8 &44.3	&38.2	\\ 
SNIP
&55.5 &43.1 &36.2 
&58.4 &47.5	&41.9 
&47.2 &35.4 &29.2 
&53.3 &42.3	&37.5 \\ 
ProPr
&60.0 &47.5 &41.6 
&62.7 &52.2	&46.1
&57.4 &45.2 &39.6 
&60.3 &49.9	&43.7\\ 
\textbf{{\proposenameMaskshort} (Ours)}
&\textbf{62.1} & \textbf{49.8}  &\textbf{44.9}
&\textbf{65.2} & \textbf{54.8}  &\textbf{48.3}
&\textbf{59.7} & \textbf{47.9}  &\textbf{43.1}
&\textbf{62.5} & \textbf{52.3}  &\textbf{46.0}\\ \hline
\Xhline{1pt} 
\end{tabular}
}
\end{table*}

\section{Ablation Study on hyper parameters $\alpha$ and $\beta$}
\label{append:ablation}
\begin{figure*}[htbp]
\centering
\includegraphics[width=0.95\textwidth]{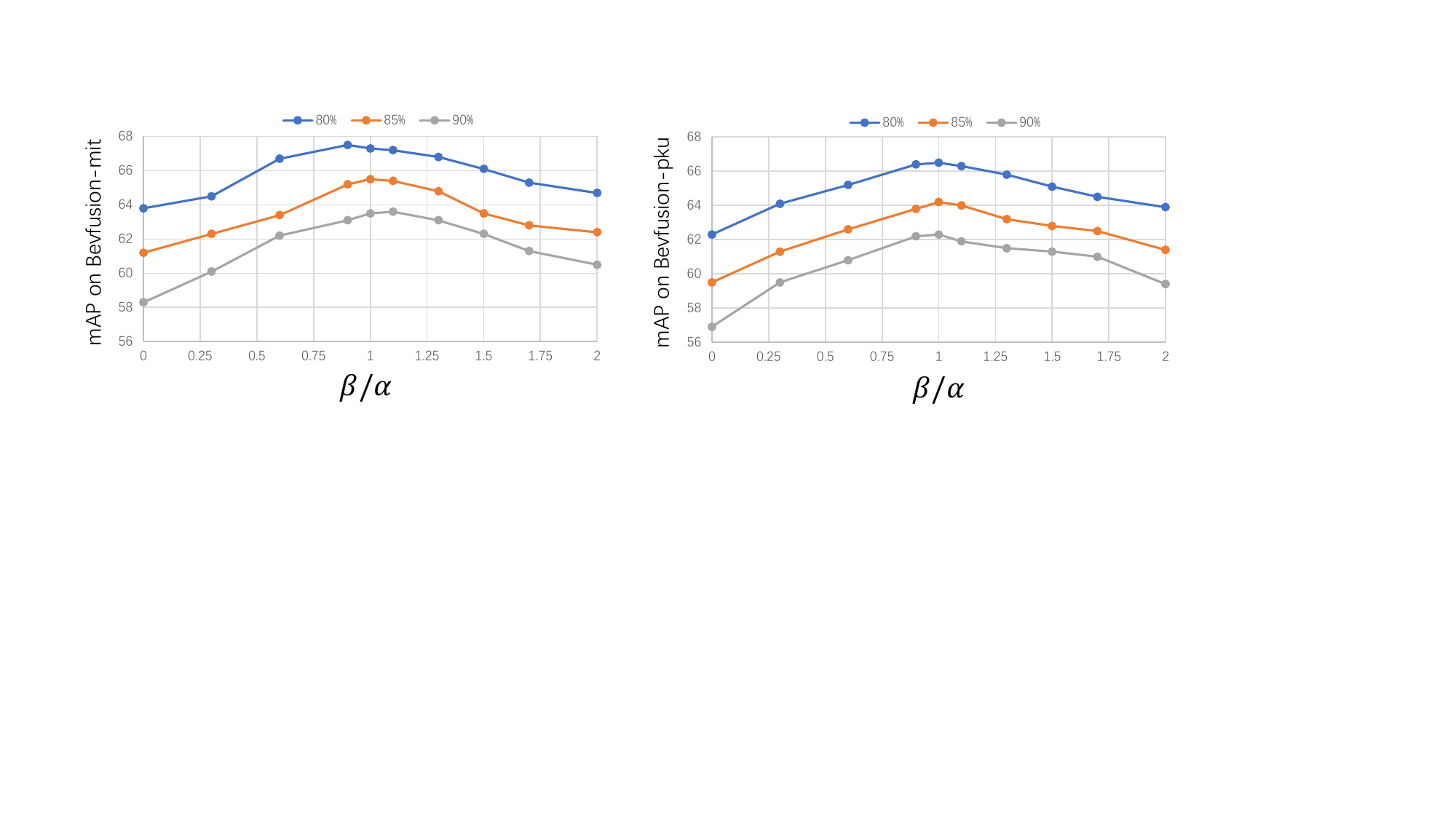}
\caption{
\textbf{Ablation Study} of hyperparameters $\alpha$ and $\beta$ on the nuScene validation dataset. We list the relationship between mAP and $\beta/\alpha$ with our approaches within 80\%,
85\%, and 90\% pruning ratios. The two baseline models, BEVfusion-mit and BEVfusion-pku are trained with SwinT and VoxelNet backbone.
}
\label{fig:hyper}
\end{figure*}
As described in Section~\ref{sec:importance}, the hyperparameters $\alpha$ and $\beta$ determine the proportion of {\proposenameImportParamEvashort} and {\proposenameRedundantParamEvashort} within the importance score evaluation function {\proposenameImportanceshort}.
To assess the impact of these indicators on overall pruning performance, we examine the relationship between mAP and the ratio $\beta/\alpha$.
Baseline models depicted in the left subfigures of Figure~\ref{fig:hyper} are from BEVfusion-mit trained with SwinT and VoxelNet backbones, while the right subfigures represent BEVfusion-pku models trained with the same backbones.
These experiments utilize the nuScenes dataset to evaluate the efficiency of 3D object detection tasks at pruning ratios of 80\%, 85\%, and 90\%.
The experimental results are presented in Figure~\ref{fig:hyper}.
Specifically, when $\beta/\alpha = 0$ in the figure, indicating only relying on {\proposenameImportParamEvashort}, there is a significant drop in mAP compared to our best-performing setup. 
This result underscore the critical role of addressing fusion-redundant parameters.
As the ratio increases, indicating greater influence from {\proposenameRedundantParamEvashort}, mAP increases, reflecting the beneficial impact of effectively identifying and pruning fusion-redundant parameters.
However, as $\beta$ surpasses a certain threshold, resulting in {\proposenameRedundantParamEvashort} outweighing {\proposenameImportParamEvashort}, mAP begins to decline again. 
It may be due to that our redundancy reactivation also reactivates some contributed parameters, which may be accidentally pruned with the excessive usage of {\proposenameRedundantParamEvashort}.
These findings highlight the selection of hyper parameters and the ablation study for {\proposenameImportParamEvashort} and {\proposenameRedundantParamEvashort} indicators in {\proposenameMaskshort}.
The results are presented in Figure~\ref{fig:hyper}, which visually depicts these dynamics across different experimental settings.

\section{Further Discussion of {\proposenameMaskshort} 
 on General Multi-modal Fusion models}
 \label{sec:discussion}
Although {\proposenameMaskshort} explores similar feature extraction due to the fusion mechanism, it remains in the perception fields, especially camera-LiDAR fusion models.
However, extending it to a wider range of multi-modal models, such as vision-language models, requires further refinement.
As we hypothesize, fusion modules across various modalities and tasks exhibit different functionalities. 
In perception-only tasks involving multiple sensors (camera, LiDAR and radar), the fusion mechanism primarily focuses on supplementing and spatial aligning across modalities by fully leveraging the physical properties of different sensors since all inputs consist of vision-based data.
Formally, low-level machine features are fused in the multi-sensor fusion mechanism.
However, in the fusion mechanism devised for different types of data, such as vision and language, things differ. 
When input data is highly disparate, such as vision and language, fusion modules tend to focus more on matching, which means synchronizing the high-level semantic context between different input formats.
Therefore, for the {\proposenameMaskshort} framework, we concentrate on the redundancy stemming from supplementary functionality, which may only primarily exist in multi-sensor fusion architectures.

\end{document}